\definecolor{cvprblue}{rgb}{0.21,0.49,0.74}
\newcommand{\PP}{\mathbb{P}}
\newtheorem{lemma}{Lemma}
\title{SG-OIF: A Stability-Guided Online Influence Framework for Reliable Vision Data}
\author{
Penghao Rao \qquad Runmin Jiang \qquad Min Xu\thanks{Corresponding author.} \\
Computational Biology Department, Carnegie Mellon University\\
{\tt\small penghaor@andrew.cmu.edu \qquad  runminj@andrew.cmu.edu \qquad mxu1@cs.cmu.edu}
}
\begin{document}
\maketitle
\begin{abstract}
Approximating training-point influence on test predictions is critical for deploying deep-learning vision models, essential for locating noisy data. Though the influence function was proposed for attributing how infinitesimal up-weighting or removal of individual training examples affects model outputs, its implementation is still challenging in deep-learning vision models: inverse-curvature computations are expensive, and training non-stationarity invalidates static approximations. Prior works use iterative solvers and low-rank surrogates to reduce cost, but offline computation lags behind training dynamics, and missing confidence calibration yields fragile rankings that misidentify critical examples. To address these challenges, we introduce a Stability-Guided Online Influence Framework (SG-OIF), the first framework that treats algorithmic stability as a real-time controller, which (i) maintains lightweight anchor IHVPs via stochastic Richardson and preconditioned Neumann; (ii) proposes modular curvature backends to modulate per-example influence scores using stability-guided residual thresholds, anomaly gating, and confidence. Experimental results show that SG-OIF achieves SOTA (State-Of-The-Art) on noise-label and out-of-distribution detection tasks across multiple datasets with various corruption. Notably, our approach achieves 91.1\% accuracy in the top 1\% prediction samples on the CIFAR-10 (20\% asym), and gets 99.8\% AUPR score on MNIST, effectively demonstrating that this framework is a practical controller for online influence estimation.
\end{abstract}    
\section{Introduction}
\label{sec:intro}

Modern vision systems are increasingly deployed in high-stakes, compliance-sensitive settings \cite{doshi2017towards, rudin2019stop}, where accountability must extend beyond aggregate accuracy to continuous, per-example oversight-surfacing mislabeled or low-quality samples \cite{northcutt2021confident}, detecting distribution shift \cite{rabanser2019failing, lipton2018detecting}, resisting data poisoning \cite{steinhardt2017certified} and enabling selective unlearning \cite{golatkar2020eternal, bourtoule2021machine}. Regulatory and operational requirements, such as the right to erasure and continual risk assessments \cite{regulation2018general} further demand streaming influence assessment rather than sporadic offline diagnostics. Workflow is shown in \cref{fig:sg_oif_workflow}

\begin{figure}[htbp]
\centering
\includegraphics[width=0.45\textwidth]{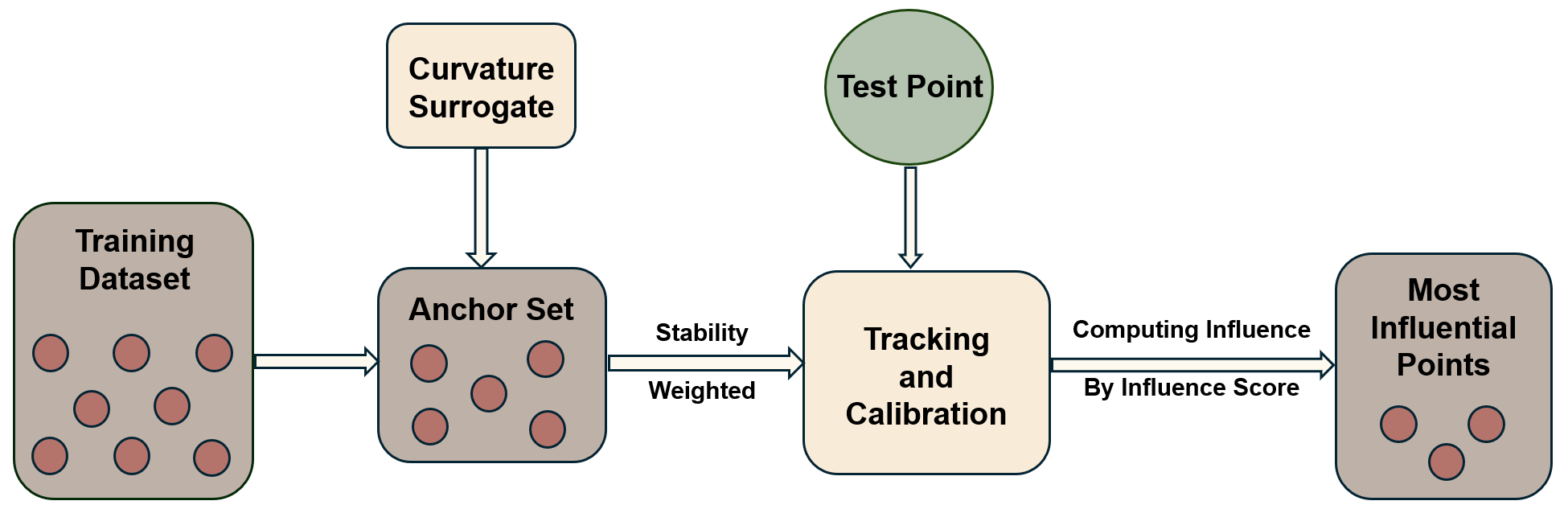}
\caption{Workflow of SG-OIF. The vature surrogate for the surrogate is updated online; then inverse Hessian vector proxies for anchors are tracked with lightweight iterations and calibrated by stability-based confidence; the influence of training data points is computed by aggregating stability-weighted per-anchor scores, with optional refinement when high-influence but low-confidence; finally, the most influential training data points are returned.}
\label{fig:sg_oif_workflow}
\end{figure}

Influence functions originate in robust statistics, quantifying the infinitesimal effect of perturbing one observation on an estimator \cite{jaeckel1972infinitesimal, hampel1974influence, cook1977detection, cook1980characterizations, pregibon1981logistic}. However, deploying influence estimation in modern deep vision models is challenging because it hinges on inverse-curvature computations that are expensive, ill-conditioned, and misaligned with the nonstationary dynamics of training. Koh and Liang \cite{koh2017understanding} adapted them to modern ML by approximating inverse Hessian–vector products with iterative second-order methods, enabling per-example attributions in high dimensions. However, this route inherits notable limitations in deep networks: Offline curvature quickly becomes stale along the training trajectory, and iterative Inverse Hessian Vector Product (IHVP) solvers are numerically fragile under non-convex saddle geometry. Further studies \cite{koh2019accuracy} evaluate offline settings on convex and near-convex surrogates, but they also face limitations: Without calibrated confidence, there is little guidance for anomaly gating in nonconvex deep networks.

Algorithmic stability provides a complementary lens for generalization aligned with governance. In regularized ERM \cite{koh2017understanding}, if replacing one point changes the loss by at most a small stability coefficient $\beta_n$, generalization scales with this coefficient; for strongly convex \cite{koh2019accuracy}, Lipschitz losses with Tikhonov regularization, this coefficient becomes tighter as regularization strengthens and the sample size grows. Extensions to iterative procedures analyze SGD perturbations across neighboring datasets, with $\beta_n$ controlled by cumulative step sizes and smoothness; in nonconvex settings, multiplicative smoothness inflates $\beta_n$. Robust training further stresses stability, as nonsmooth \cite{kusch2025second}, weakly convex objectives can make vanilla adversarial SGD non–uniformly stable, aligning with robust overfitting; smoothing and proximal designs \cite{t2020personalized} can help recover stability guarantees on the order typical of well-regularized regimes. Thus, the field evolves from convex ERM with explicit formulas, through stepwise analyses of stochastic optimization, to a stable restoring and shifting design principle.

Across governance tasks, the core need is to rank training examples by causal leverage with calibrated reliability. Existing tools face multiple challenges. Shapley-style valuation \cite{mehta2025exchangeability} is combinatorial and high-variance, straining budgets in large-scale settings-motivating lighter surrogates with variance control. Classical influence \cite{basu2020influence, alaa2020discriminative, du2014influence} is based on inverse curvature; IHVP pipelines are fragile in saddle-rich non-convex regimes, motivating curvature-aware updates with online refresh and stability gating. Trajectory heuristics \cite{koh2017understanding} entangle optimization transients with causality, mis-ranking late-stage hard cases-calling for drift-aware corrections and temporal consistency. Classical stability \cite{xiao2022stability} provides dataset-level worst-case bounds without per-example confidence-necessitating calibrated, instance-level confidence to gate large attributions. These gaps call for a streaming framework with modest overhead, curvature and drift aware computation, and per-example confidence.

To improve reliable, real-time causal ranking under nonconvex training, we introduce a stability-guided streaming framework that treats algorithmic stability as a confidence modulator for online inverse-curvature computation. At its core, we maintain a lightweight anchor bank and iteratively refine factored inverse Hessian vector directions using stochastic curvature sketching with preconditioned Neumann updates. These updates are steered by stability-inspired signals-gaps between Neumann and Richardson iterates, projected gradient energy, and the variance of stochastic Hessian probes, which adapt truncation depth, step sizes, and the growth of a low-rank subspace, while gating anomalies to suppress numerically unsupported large scores. Building on this, we calibrate per-example confidence via empirical-Bernstein intervals and local contraction checks, enabling residual-budgeted early stopping and principled score modulation. Finally, we expose task adaptors for label error triage, poisoning localization, shift sentinels, curriculum reweighting, and unlearning prioritization, and we ground the design with analysis under local Polyak-Lojasiewicz regularity and mild Hessian moment conditions, yielding a bias–variance decomposition, concentration bounds on misranking probability, and near-Pareto compute–accuracy trade-offs. The architecture of our framework is shown in \cref{fig:architecture_overview}.

The main contributions of this paper are:
\begin{itemize}
\item \textbf{Stability-Guided Online Influence Framework (SG-OIF):}
We unify influence functions with algorithmic stability by using stability as a reliability controller for online inverse-curvature approximation.

\item \textbf{Error Decomposition and Theoretical Guarantees:}
We develop an error decomposition and establish convergence under the Polyak-Lojasiewicz condition with concentration-style bounds linking stability proxies to misclassification probability.

\item \textbf{Empirical Results:}
Experiments across diverse benchmarks demonstrate that SG-OIF delivers higher detection quality and lower computational overhead in noise detection and achieves more stable and robust rankings on distribution shift monitoring.
\end{itemize}

\begin{figure*}[htbp]
\centering
\includegraphics[width=\textwidth]{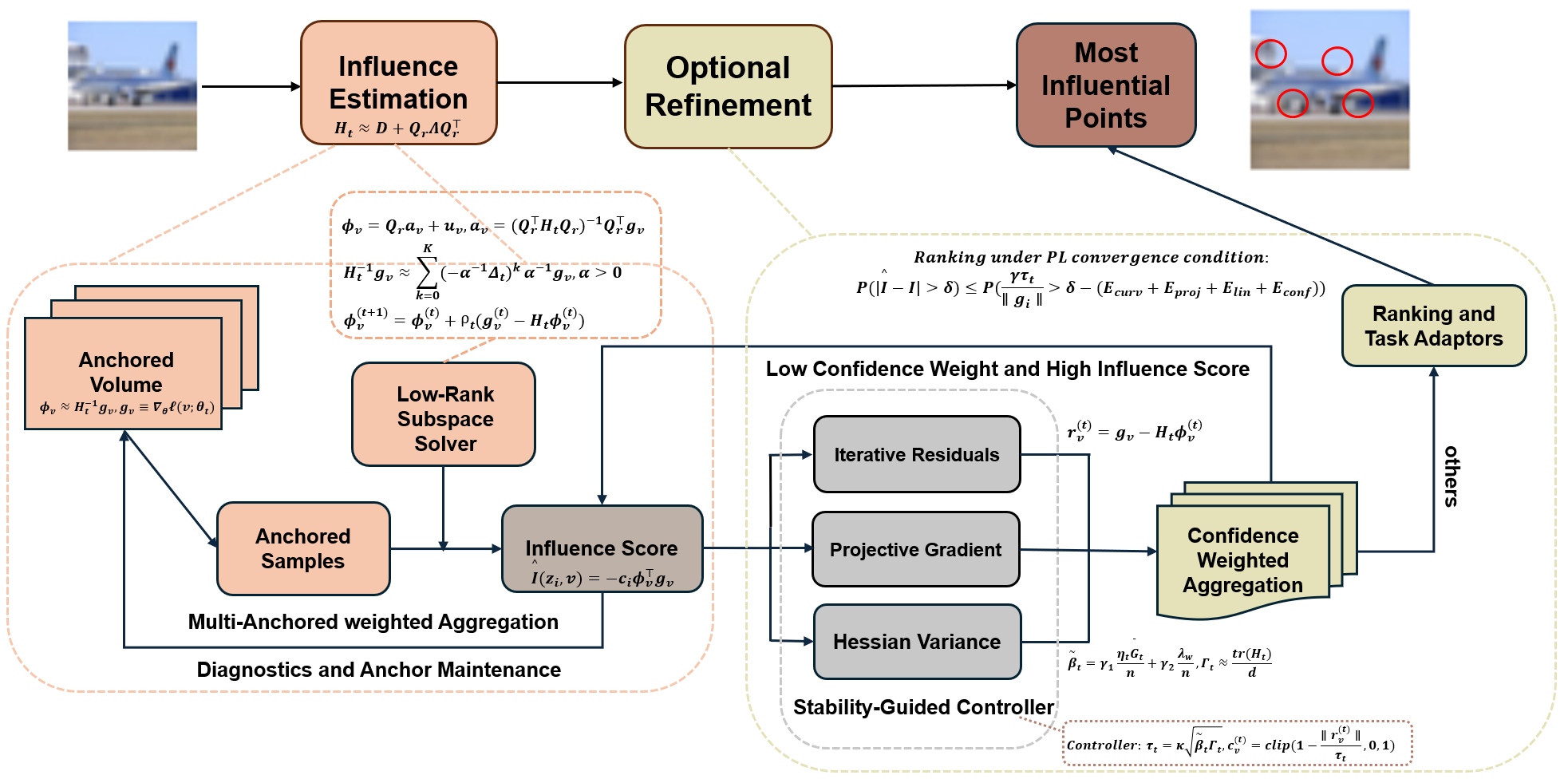}
\caption{Overview of SG-OIF Architecture. The pipeline begins with input, then computes influence scores via stability-guided scoring and returns the most influential points. In detail, for samples that exhibit high influence but low confidence weights, an optional refinement step is triggered to enhance the estimation. This step is so so-called Stability-Guided Controller. Influence is anchored: anchored samples flow back to the anchored volume. Multi-anchor weighted aggregation combines per-anchor results into a refined influence score. his step also performs diagnostics and anchor maintenance, which modifies the anchored volume. These calibrated scores feed ranking and task adaptors to select the most influential data and support downstream actions. The framework ensures reliable, robust influence estimation and output.}
\label{fig:architecture_overview}
\end{figure*}
\section{Related Work}
\label{sec:related}

\subsection{Algorithmic Stability}
Using stability tools is one of the key parts in reliable training. Rogers and Wagner \cite{rogers1978finite} first proposed this concept while Bousquet and Elisseeff \cite{bousquet2002stability} first put it into statistical learning problems by analyzing the algorithm-based generalization
bounds. Other works \cite{feldman2018generalization, xiao2022adversarial, t2020personalized} demonstrate the importance of uniform stability in small training-set perturbations bound loss changes and \cite{yin2019rademacher, awasthi2020adversarial} study the limit of this generalization bound, offering practical guidance for reliable training. However, they typically report aggregate generalization behavior rather than per-sample reliability signals. Later works \cite{raghunathan2019adversarial, madry2017towards, schmidt2018adversarially} have shown the necessity of tracking per-example and proposed reliable methods. But current works \cite{yin2019rademacher, awasthi2020adversarial, xiao2022adversarial, t2020personalized} still do not show that algorithm stability can be used to estimate the influence of training data. In our work, we first combine the algorithm stability with per-example trace, showing that the influence of each data point can be detected and controlled.

\subsection{Influence Functions}
The use of influence function can be traced back to the Cook et al. works \cite{cook1977detection, cook1980characterizations, cook1986assessment, cook1991r} on model diagnostics. Later, Koh and
Liang et al. \cite{koh2017understanding} propose that this can be used for approximating the effect of upweighting and removing individual training examples on parameters and predictions.  Previous studies show that second-order attributions are computationally feasible at scale: Pearlmutter et al. \cite{pearlmutter1994fast} demonstrate that Hessian-vector products can be computed exactly and efficiently, and Martens et al. \cite{martens2010deep} show that Hessian-free methods with conjugate gradients can leverage these products to avoid explicit Hessian inversion in large models. This brought the second-approximation influence function to large-scale deep learning and has been followed up by numerous publications \cite{koh2022stronger, schulam2019can, brunet2019understanding, koh2019accuracy, basu2020second, feldman2020neural}. Building on these computational primitives, researchers then introduce structured curvature to further reduce cost: To amortize costs across many queries and evolving models, Raskutti et al. \cite{raskutti2016statistical} show that randomized sketching can accelerate inverse operations and support batched or streaming settings, Ba et al. \cite{ba2017distributed} demonstrate that Kronecker-factored approximations preserve key curvature structure while dramatically lowering memory and compute, Zhao et al. \cite{zhao2016low} show that low-rank-plus-diagonal hybrids offer additional tractability for inverse applications. In parallel, to enhance numerical stability, Botev et al. \cite{botev2020gauss} demonstrate that Gauss--Newton approximations provide positive semi-definite surrogates that stabilize second-order updates and attributions, while Deng et al. \cite{deng2023uncertainty} show that Fisher-based approaches incorporate uncertainty-aware curvature that can improve robustness. Nevertheless, these modern models still suffer from a fragile inverse transformation of the Hessian matrix, and we address this problem by proposing a stability-guided gate.

\section{Methodology}
\label{sec:methodology}
\begin{figure}[htbp]
\centering
\includegraphics[width=\linewidth]{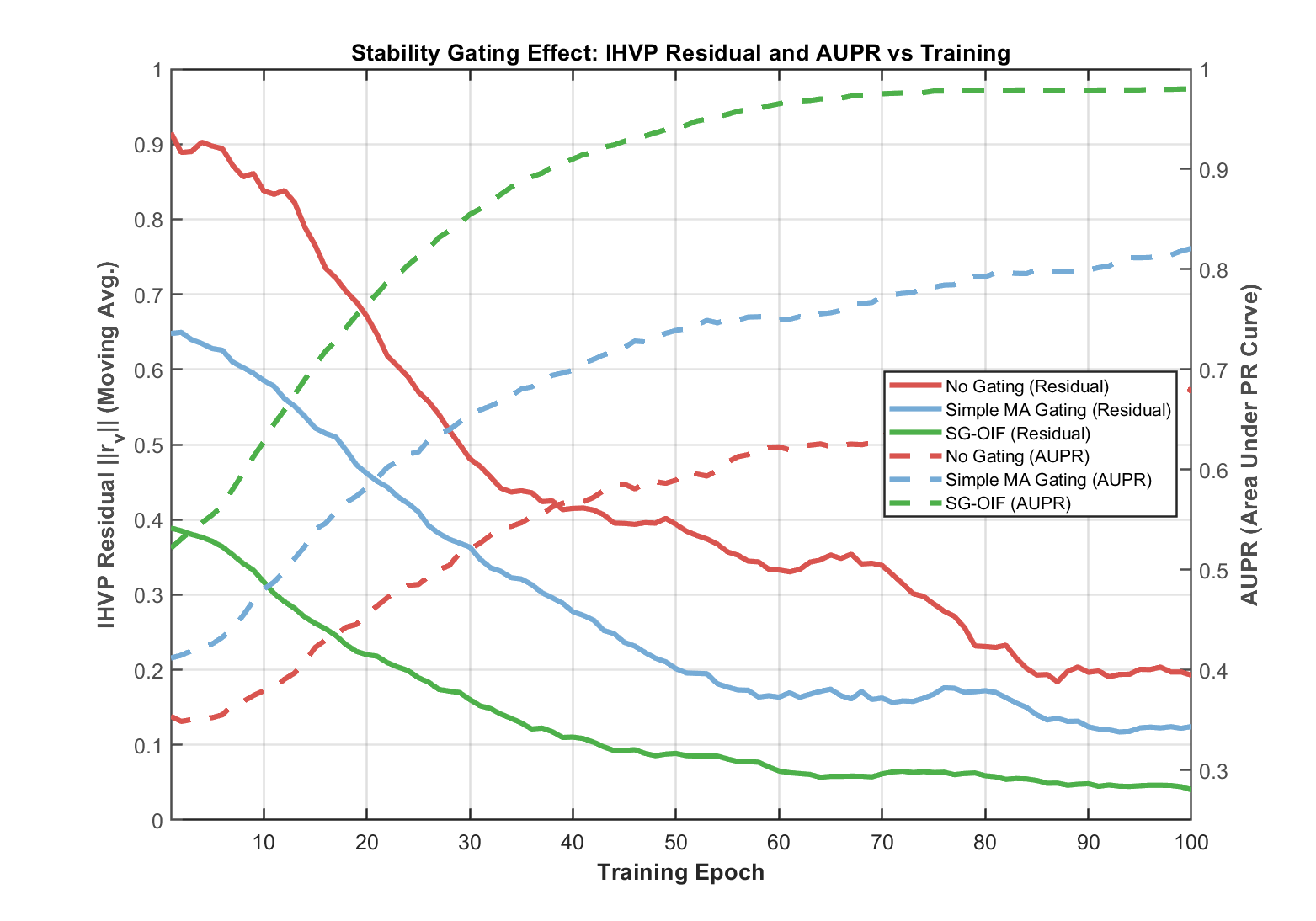}
\caption{\textbf{Residual Convergence and AUPR Improvement Under Stability Gating.} SG-OIF (green) achieves rapid residual (solid line) convergence and monotonic AUPR (dashed line) growth, outperforming no-gating (red) and simple MA gating (blue), confirming the necessity of stability monitoring for robust ranking.}
\label{fig:dual_axis}
\end{figure}

\subsection{SG-OIF Architecture}
\cref{fig:dual_axis} illustrates the core principle of our framework: stability-guided control enables reliable influence estimation throughout training by coupling residual monitoring with confidence modulation. \cref{alg:sgoif} shows the whole process of our method. More details are shown below.

Our framework maintains a compact bank of anchor inverse curvature directions and refines them with lightweight iterative updates while continuously auditing numerical support. Stability signals modulate how aggressively we update, when to refresh curvature, and how to modify the scores. Concretely, it has three complementary components:
\begin{itemize}
\item \textbf{Stability-Guided Influence Estimation.} We monitor solver residuals $r_v = g_v - H_t\phi_v$ and modulate each influence score by a confidence weight $c_v \in [0,1]$, suppressing attributions when numerical fidelity is low.
\item \textbf{Structured Curvature Backend.} We maintain modular curvature approximations that trade accuracy for speed and enable tractable inverse updates.
\item \textbf{Streaming IHVP Refinement.} We update anchor IHVPs $\{\phi_v\}$ via preconditioned fixed-point iterations, accelerated by a low-rank subspace $Q_r$ that captures dominant curvature directions and amortizes computation across anchors.
\end{itemize}
This makes influence estimation both responsive and trustworthy: stability-guided estimation suppresses unsupported extremes, curvature backend is refreshed as training evolves, and the IHVP refinement captures dominant directions.

\paragraph{Stability-Guided Influence Estimation.} We maintain IHVP per anchor $v$ and score each training example $z_i$ by
\begin{equation}
\hat{I}(z_i,v) = -c_v \phi_v^\top g_i, \quad c_v \in [0,1],
\label{eq:example_score}
\end{equation}
where $c_v$ is a confidence weight derived from the solver residual $r_v = g_v - H_t\phi_v$. To set the residual tolerance adaptively, we estimate a training stability proxy
\begin{equation}
\tilde{\beta}_t = \gamma_1 \frac{\eta_t \bar{G}_t}{n} + \gamma_2 \frac{\lambda_w}{n}
\label{eq:stability_proxy}
\end{equation}
where $\eta_t$ is the learning rate, $\bar{G}_t$ the gradient norm, and $\lambda_w$ the weight-decay coefficient. We couple $\tilde{\beta}_t$ with a curvature condition number proxy $\Gamma_t$ to define the tolerance
\begin{equation}
\tau_t = \kappa \tilde{\beta}_t \Gamma_t
\label{eq:tolearnce}
\end{equation}
and gate the confidence via
\begin{equation}
c_v^{(t)} = \mathrm{clip}\Big(1 - \frac{\|r_v^{(t)}\|}{\tau_t}, 0, 1\Big)
\label{eq:confidence_gate}
\end{equation}
This mechanism suppresses low-fidelity attributions in unstable and ill-conditioned phases. When a large magnitude score $|\phi_v^\top g_i|$ emerges with low confidence $c_v$, we trigger targeted refinement to elevate fidelity where it impacts ranking. Finally, per-example scores will be weighted:
\begin{equation}
\hat{I}(z_i) = \sum_{v\in\mathcal{V}} w_v \hat{I}(z_i,v), \quad w_v = \frac{c_v}{\sum_{u\in\mathcal{V}} c_u}
\label{eq:weighted_score}
\end{equation}

\paragraph{Structured Curvature Backend.}
We support four backends with increasing fidelity: Diagonal second moment approximations that scale to large models; Empirical Fisher matrices that improve conditioning via full batch statistics; Kronecker-factored blocks that capture layer-wise correlations; and Hybrid low-rank plus decompositions.

To maintain high-quality approximations, we ensure the anchor set $\mathcal{V}$ provides sufficient gradient coverage. Specifically, we construct the Gram matrix
\begin{equation}
G = \Phi^\top\Phi \in \mathbb{R}^{K \times K}
\label{eq:gram_matrix}
\end{equation}
where $\Phi = [\phi_{v_1}, \ldots, \phi_{v_K}]$ stacks the $\ell_2$-normalized anchor features. When approximating IHVP, the residual satisfies
\begin{equation}
\resizebox{0.9\columnwidth}{!}{$\displaystyle
\| H^{-1}g_z - \sum_{v \in \mathcal{V}} \alpha_v \phi_v \|^2 \leq \frac{1}{\lambda_{\min}(G)} \| g_z - \text{proj}_{\text{span}(\Phi)} g_z \|^2
$}
\label{eq:residual}
\end{equation}
and we refresh anchors when it drops below $0.1$.

\paragraph{Streaming IHVP Refinement.}
Refinement is separated into two parts: Fixed-Point Iteration with Preconditioning and Low-Rank Subspace Acceleration.

\textbf{Fixed-Point Iteration with Preconditioning.}
We refine each $\phi_v$ via stochastic Richardson iteration:
\begin{equation}
\phi_v^{(t+1)} = \phi_v^{(t)} + \rho_t \bigl( g_v^{(t)} - H_t \phi_v^{(t)} \bigr)
\label{eq:R_ite}
\end{equation}
where $\rho_t$ is a step size. When the curvature admits a diagonal-plus-perturbation structure $H_t = \alpha I + \Delta_t$ with $\alpha > 0$, we apply a preconditioned Neumann expansion
\begin{equation}
H_t^{-1} g_v \approx \sum_{k=0}^{K} \bigl(-\alpha^{-1}\Delta_t\bigr)^k \alpha^{-1} g_v
\label{eq:N_expan}
\end{equation}
We increase the truncation order $K$ adaptively: when residual norms are small, we expand further; when curvature is extremely jumping, we stop early to avoid amplifying errors.

\textbf{Low-Rank Subspace Acceleration.}
To amortize computation across anchors and training steps, we maintain a streaming rank-$r$ eigenspace $Q_r$ that captures dominant curvature directions. We decompose each IHVP as
\begin{equation}
\phi_v = Q_r a_v + u_v, \quad a_v = \bigl(Q_r^\top H_t Q_r\bigr)^{-1} Q_r^\top g_v
\label{eq:ihvp_decompostion}
\end{equation}
where $a_v$ is the in-subspace component and $u_v$ the residual. We update $u_v$ via a hybrid curvature model $H_t \approx D + Q_r \Lambda Q_r^\top$ that combines a diagonal approximation $D$ with the low-rank correction. This Woodbury-style factorization enables fast inverse applications and controlled acceleration.

\subsection{Theoretical Guarantee}
\paragraph{Hierarchical Error Decomposition.}
Let $e_v$ denote the IHVP error. The influence error follows the decomposition:
\begin{equation}
\resizebox{0.85\columnwidth}{!}{$\displaystyle
|\hat{I}(z_i,v)-I(z_i,v)| \le 
E_{\text{solver}} + E_{\text{curv}} + E_{\text{proj}} + E_{\text{lin}} + E_{\text{conf}}
$}
\label{eq:error_decompostion}
\end{equation}
with $E_{\text{solver}}=|e_v|,|g_i|$, curvature mismatch $E_{\text{curv}}$, subspace projection loss $E_{\text{proj}}$, higher-order drift $E_{\text{lin}}$, and confidence
\begin{equation}
E_{\text{conf}}=(1-c_v)|\phi_v^\top g_i|
\label{eq:E_cof}
\end{equation}
Given that $|e_v|\le \gamma |r_v|$, $|r_v|\le \tau_t$ and $\delta>0$, we obtain
\begin{equation}
\resizebox{0.85\columnwidth}{!}{$\displaystyle
\PP(|\hat{I}-I|>\delta) \le 
\PP\Big(\gamma \tau_t > \frac{\delta - (E_{\text{curv}}+E_{\text{proj}}+E_{\text{lin}}+E_{\text{conf}})}{\|g_i\|}\Big)
$}
\label{eq:condition}
\end{equation}
which explicitly links the stability scaling $\kappa$ (via $\tau_t$) to mis-ranking probability, thus enabling principled tuning.
\paragraph{Polyak-Lojasiewicz Convergence.}
Under a local Polyak - Lojasiewicz condition, the Richardson iterates exhibit contraction towards the IHVP up to a bound controlled by the dispersion of $H_t$. This ensures stabilized $\phi_v$ trajectories and well-tuned confidences $c_v$ throughout training.
\begin{algorithm}[htbp]
\caption{Stability-Gated Online Influence (SG-OIF)}
\label{alg:sgoif}
\textbf{Input:} Anchor set $\mathcal{V}$; inverse step schedule ${\rho_t}$; low-rank update period $T_r$; anchor refresh period $T_a$; stability scale $\kappa$\
\textbf{Output:} Per-example influence scores ${\hat{I}(z_i)}$; anchor states ${(\phi_v,c_v)}_{v\in\mathcal{V}}$
\begin{algorithmic}[1]
\STATE Initialize $\phi_v \gets 0$, $c_v \gets 0$ for all $v\in\mathcal{V}$
\FOR{$t=0,1,2,\dots$}
\STATE Sample minibatch $\mathcal{B}_t$; update model; build curvature surrogate $H_t$; \textbf{if} $t \bmod T_r = 0$ \textbf{ then } update low-rank basis $Q_r$
\STATE \textbf{For} $v\in\mathcal{V}$: compute $g_v$; update $\phi_v \gets \phi_v + \rho_t\big(g_v - H_t \phi_v\big)$; set $r_v \gets g_v - H_t \phi_v$; set $\tau_t \gets \kappa,\tilde{\beta}_t,\Gamma_t$; set $c_v \gets \mathrm{clip}!\bigl(1-|r_v|/\tau_t,,0,,1\bigr)$
\STATE \textbf{For} $z_i\in\mathcal{B}t$: compute $g_i$; set $\hat{I}(z_i) \gets \sum{v\in\mathcal{V}} c_v,\phi_v^\top g_i$
\STATE \textbf{If} $t \bmod T_a = 0$ \textbf{ then } replace lowest-confidence anchors; optionally refine low-$c_v$ / large $|\phi_v^\top g_i|$ via short conjugate-gradient
\ENDFOR
\end{algorithmic}
\end{algorithm}
\section{Experiments}
\label{sec:experiments}
\begin{table*}[htbp]
\centering
\scriptsize
\caption{P@1\% of SG-OIF versus recent SOTA methods for learning with noisy labels in CIFAR-10 \cite{krizhevsky2012imagenet}. When other methods decrease in performance from low sparsity (0.0) to high sparsity (0.6), SG-OIF consistently outperforms.}
\begin{adjustbox}{width=1.0\textwidth}
\begin{tabular}{l|cccc|cccc|cccc}
\toprule
\multirow{2}{*}{Method} & \multicolumn{4}{c|}{Sparsity (Noise 20\%)} & \multicolumn{4}{c|}{Sparsity (Noise 40\%)} & \multicolumn{4}{c}{Sparsity (Noise 70\%)} \\
\cmidrule(lr){2-5} \cmidrule(lr){6-9} \cmidrule(lr){10-13}
 & 0.0 & 0.2 & 0.4 & 0.6 & 0.0 & 0.2 & 0.4 & 0.6 & 0.0 & 0.2 & 0.4 & 0.6 \\
\midrule
INCV\cite{chen2019understanding} & 87.8 & 88.6 & 89.6 & 89.2 & 84.4 & 76.6 & 85.4 & 73.6 & 28.3 & 25.3 & 34.8 & 29.7 \\
Mixup\cite{zhang2017mixup} & 85.6 & 86.8 & 87.0 & 84.3 & 76.1 & 75.4 & 68.6 & 59.8 & 32.2 & 31.3 & 32.3 & 26.9 \\
SCE-loss\cite{wang2019symmetric} & 87.2 & 87.5 & 88.8 & 84.4 & 76.3 & 74.1 & 64.9 & 58.3 & 33.0 & 28.7 & 30.9 & 24.0 \\
MentorNet\cite{jiang2018mentornet} & 84.9 & 85.1 & 83.2 & 83.4 & 64.4 & 64.2 & 62.4 & 61.5 & 30.0 & 31.6 & 29.3 & 27.9 \\
Co-Teaching\cite{han2018co} & 81.2 & 81.3 & 81.4 & 80.6 & 62.9 & 61.6 & 60.9 & 58.1 & 30.5 & 30.2 & 27.7 & 26.0 \\
FASTIF\cite{guo2020fastif} & 80.0 & 80.0 & 79.7 & 79.1 & 58.6 & 61.2 & 59.1 & 57.5 & 28.4 & 28.5 & 27.9 & 27.3 \\
TrancIn\cite{pruthi2020estimating} & 78.1 & 78.9 & 80.8 & 79.3 & 60.5 & 60.4 & 61.2 & 58.6 & 29.0 & 29.4 & 29.1 & 26.8 \\
CL\cite{northcutt2021confident} & 78.4 & 79.2 & 79.0 & 78.2 & 60.2 & 60.8 & 59.6 & 57.3 & 27.0 & 29.7 & 28.2 & 26.8 \\
\midrule
\rowcolor{gray!20}\textbf{SG-OIF (ours)} & \textbf{91.1}$\uparrow$ & \textbf{90.9}$\uparrow$ & \textbf{91.1}$\uparrow$ & \textbf{91.3}$\uparrow$ & \textbf{87.1}$\uparrow$ & \textbf{86.9}$\uparrow$ & \textbf{86.7}$\uparrow$ & \textbf{87.2}$\uparrow$ & \textbf{41.1}$\uparrow$ & \textbf{41.8}$\uparrow$ & \textbf{39.1}$\uparrow$ & \textbf{36.4}$\uparrow$ \\ 
\bottomrule
\end{tabular}
\end{adjustbox}
\label{tab:noisy-main}
\end{table*}

\begin{table*}[htbp]
\centering
\caption{P@1\%, AUC-PR, and Overhead of SG-OIF across CIFAR-100, WebVision, and Clothing1M. Compared with the same SOTA models, SG-OIF achieves superior results with notably lower computational efficiency.}
\begin{adjustbox}{width=1.0\textwidth}
\begin{tabular}{l|ccc|ccc|ccc}
\toprule
\multirow{2}{*}{Method} & \multicolumn{3}{c|}{CIFAR-100 \cite{krizhevsky2012imagenet}} & \multicolumn{3}{c|}{WebVision \cite{li2017webvision}} & \multicolumn{3}{c}{Clothing1M \cite{xiao2015learning}} \\
\cmidrule(lr){2-4} \cmidrule(lr){5-7} \cmidrule(lr){8-10}
 & P@1\% $\uparrow$ & AUC-PR $\uparrow$ & Overhead $\downarrow$ & P@1\% $\uparrow$ & AUC-PR $\uparrow$ & Overhead $\downarrow$ & P@1\% $\uparrow$ & AUC-PR $\uparrow$ & Overhead $\downarrow$ \\
\midrule
INCV\cite{chen2019understanding} & 80.3 & 62.8 & 1.11$\times$ & 62.4 & 52.7 & 1.05$\times$ & 59.3 & 49.4 & 1.05$\times$ \\
Mixup\cite{zhang2017mixup} & 78.1 & 61.2 & 1.41$\times$ & 59.9 & 50.0 & 1.34$\times$ & 57.0 & 47.4 & 1.34$\times$ \\
SCE-loss\cite{wang2019symmetric} & 84.1 & 66.1 & 1.33$\times$ & 63.1 & 52.8 & 1.27$\times$ & 61.1 & 50.4 & 1.27$\times$ \\
MentorNet\cite{jiang2018mentornet}& 80.1 & 64.2 & 1.21$\times$ & 62.0 & 52.0 & 1.14$\times$ & 60.1 & 48.7 & 1.14$\times$ \\
Co-Teaching\cite{han2018co} & 76.1 & 60.3 & 1.32$\times$ & 57.6 & 48.2 & 1.25$\times$ & 55.0 & 45.2 & 1.25$\times$ \\
FASTIF\cite{guo2020fastif} & 75.1 & 59.2 & 1.25$\times$ & 56.9 & 47.0 & 1.11$\times$ & 55.0 & 45.4 & 1.11$\times$ \\
TrancIn\cite{pruthi2020estimating}& 73.3 & 58.1 & 1.09$\times$ & 56.0 & 47.4 & 1.09$\times$ & 63.2 & 45.5 & 1.09$\times$ \\
CL\cite{northcutt2021confident} & 74.9 & 59.1 & 1.00$\times$ & 57.1 & 47.8 & 1.00$\times$ & 54.3 & 46.1 & 1.00$\times$ \\
\midrule
\rowcolor{gray!20}\textbf{SG-OIF (ours)} & \textbf{88.5} & \textbf{69.8} & \textbf{0.98$\times$} & \textbf{67.5} & \textbf{55.4} & \textbf{0.97$\times$} & \textbf{63.8} & \textbf{52.5} & \textbf{0.97$\times$} \\
\bottomrule
\end{tabular}
\end{adjustbox}
\label{tab:noisy-further}
\end{table*}

\begin{table*}[htbp]
\centering
\caption{Results on OOD Detection include various datasets. We report the metric of AUROC and AUPR. "Avg." means averages all the provided AUROCs and AUPRs within the block. "Acc." reports the running time on the CIFAR-100 benchmark for universal comparability.}
\begin{tabular}{l|ccccc|c}
\toprule
\multirow{2}{*}{Method} & \multicolumn{6}{c}{OOD Detection (AUROC $\uparrow$ / AUPR $\uparrow$)} \\
\cmidrule{2-7}
& MNIST \cite{lecun2002gradient} & CIFAR-10 \cite{krizhevsky2012imagenet} & CIFAR-100 \cite{krizhevsky2012imagenet} & ImageNet \cite{deng2009imagenet} & Avg. & Acc. $\downarrow$ \\
\midrule
\multicolumn{7}{l}{\textit{- Anomaly Detection}} \\
\midrule
CutPaste\cite{li2021cutpaste} & 85.1 / 92.4 & 80.3 / 83.2 & 71.7 / 83.3 & 74.2 / 83.9 & 77.8 / 85.7 & 76.8 \\
DRAEM\cite{zavrtanik2021draem} & 79.3 / 99.1 & 77.3 / 83.3 & 72.7 / 85.4 & 72.5/ 83.6 & 75.5 / 87.9 & 76.5 \\
PatchCore\cite{roth2022towards} & 73.7 / 83.2 & 77.6 / 72.4 & 77.3 / 73.6 & 77.9 / 79.3 & 76.6 / 77.1 & 77.0 \\
Gram\cite{sastry2020detecting} & 73.9 / 99.6 & 58.6 / 67.5 & 55.4 / 72.7 & 68.3 / 89.2 & 64.1 / 82.2 & 77.1 \\
\midrule
\multicolumn{7}{l}{\textit{- Energy \& Gradient-based Methods}} \\
\midrule
EBO\cite{liu2020energy} & 90.8 / 98.8 & 87.4 / 88.9 & 71.3 / 68.0 & 73.5 / 92.8 & 80.8 / 87.1 & 77.1 \\
GradNorm\cite{huang2021importance} & 76.6 / 96.4 & 54.8 / 53.4 & 70.4 / 67.2 & 75.7 / 95.8 & 69.4 / 78.2 & 77.1 \\
ReAct\cite{sun2021react} & 90.3 / 97.4 & 87.6 / 89.0 & 79.5 / 80.5 & 79.3 / 95.2 & 84.2 / 90.5 & 75.8 \\
MLS\cite{basart2022scaling} & 92.5 / 99.1 & 86.1 / 88.8 & 83.0 / 78.6 & 73.6 / 92.3 & 83.8 / 89.7 & 77.1 \\
KLM\cite{basart2022scaling} & 80.3 / 96.1 & 78.9 / 82.7 & 75.5 / 74.7 & 74.2 / 93.1 & 77.2 / 86.7 & 77.1 \\
DICE\cite{sun2022dice} & 78.2 / 93.9 & 81.1 / 85.2 & 79.6 / 79.0 & 73.8 / 95.7 & 78.2 / 88.5 & 76.6 \\
\midrule
\multicolumn{7}{l}{\textit{- Feature-based Methods}} \\
\midrule
VIM\cite{wang2022vim} & 94.6 / 99.0 & 88.0 / 92.7 & 74.9 / 82.4 & 79.9 / 96.4 & 84.4 / 92.6 & 77.1 \\
KNN\cite{sun2022out} & 96.5 / 96.7 & 90.5 / 92.8 & 79.9 / 82.2 & 80.8 / 98.0 & 86.9 / 92.4 & 77.1 \\
\midrule
\multicolumn{7}{l}{\textit{- Generative Approaches}} \\
\midrule
G-ODIN\cite{hsu2020generalized} & 81.0 / 79.2 & 89.0 / 95.8 & 76.4 / 86.0 & 79.2 / 74.9 & 81.4 / 84.0 & 74.5 \\
CSI\cite{tack2020csi} & 75.8 / 91.6 & 89.1 / 92.5 & 70.8 / 66.3 & 81.6 / 90.4 & 79.3 / 85.2 & 61.2 \\
ARPL\cite{chen2021adversarial} & 93.9 / 99.0 & 87.2 / 88.0 & 74.9 / 74.0 & 79.0 / 88.7 & 83.8 / 87.4 & 71.7 \\
MOS\cite{huang2021mos} & 93.2 / 94.3 & 60.8 / 61.2 & 62.8 / 55.4 & 81.3 / 96.7 & 74.5 / 76.9 & 63.5 \\
OpenGAN\cite{kong2021opengan} & 42.5 / 26.7 & 36.6 / 43.2 & 69.6 / 76.0 & 76.7 / 89.6 & 56.4 / 58.9 & 77.1 \\
VOS\cite{du2022vos} & 52.1 / 65.3 & 87.5 / 90.9 & 71.9 / 71.9 & 65.3 / 87.9 & 69.2 / 79.0 & 77.1 \\
\midrule
\multicolumn{7}{l}{\textit{- Data Augmentation Techniques}} \\
\midrule
LogitNorm\cite{wei2022mitigating} & 91.1 / 99.4 & 92.5 / 95.3 & 78.4 / 81.3 & 79.4 / 92.8 & 85.4 / 92.2 & 76.5 \\
UDG\cite{yang2021semantically} & 91.2 / 96.3 & 91.9 / 93.4 & 75.8 / 67.7 & 82.3 / 92.8 & 85.3 / 87.6 & 77.5 \\
PixMix\cite{hendrycks2022pixmix} & 93.7 / 99.5 & 92.1 / 95.7 & 79.6 / 82.5 & 81.5 / 92.4 & 86.7 / 92.5 & 77.0 \\
\midrule
\rowcolor{gray!20}\textbf{SG-OIF (ours)} & \textbf{96.8 / 99.8} & \textbf{93.1 / 96.7} & \textbf{83.0 / 85.5} & \textbf{83.3 / 98.4} & \textbf{89.1 / 95.1} & \textbf{60.0} \\
\bottomrule
\end{tabular}
\label{tab:ood-main}
\end{table*}

\subsection{Noisy-Label Detection}
\paragraph{Baselines and Our Methods}
We compare the SG-OIF performance of noise detection versus seven recent highly competitive approaches, including Iterative Noisy Cross-Validation (INCV) \cite{chen2019understanding}, Mixup learning principle models (Mixup) \cite{zhang2017mixup}, Symmetric Cross Entropy Loss method (SCE-loss) \cite{wang2019symmetric}, MentorNet\cite{jiang2018mentornet}, Co-teaching learning paradigm (Co-Teaching)\cite{han2018co}, Tracing Gradient Descent method (TracIn) \cite{pruthi2020estimating}, Fast-scalable Influence Functions (FASTIF) \cite{guo2020fastif}, and Confident Learning (CL) \cite{northcutt2021confident}.

The evaluation is conducted on four standard noisy label detection datasets: CIFAR-10 \cite{krizhevsky2012imagenet} with various synthetic noise percent and sparsities, CIFAR-100 \cite{krizhevsky2012imagenet} with 20\% asymmetric synthetic noise and 0 sparsity, WebVision \cite{li2017webvision} containing real-world web-crawled noisy labels, and Clothing1M \cite{xiao2015learning} representing large-scale real-world label noise scenarios. We report three evaluation standards: Test Precision at 1\% accuracy (P@1\%), Area Under the Curve-Precision-Recall (AUC-PR), and Computational Cost Overhead (Overhead). Details of these baselines, evaluation metrics, and training datasets are shown in the Appendix.

\paragraph{Comparison with SOTA models.}
Results in \cref{tab:noisy-main} demonstrate that SG-OIF exhibits significant performance on the CIFAR-10 dataset. Across different noise levels and sparsity settings, SG-OIF substantially outperforms existing SOTA methods in P@1\%. Specifically, under the 20\% noise level, SG-OIF achieves 91.3\% precision, surpassing the second-best method, INCV, by approximately 3\%. Notably, the improvements become more pronounced at higher noise, showing SG-OIF's ability for solving noisy datasets.

More importantly, SG-OF demonstrates exceptional sparsity robustness. As sparsity increases from 0 to 0.6, other SOTA mathods exhibit notable performance degradation, but SG-OIF maintains consistently high performance across all sparsity levels. This characteristic indicates that SG-OIF, through its strategy of accurately estimating and removing label errors followed by training on the cleaned data, can effectively handle high-noise and high-sparsity scenes, providing a more reliable approach for detecting noise labels.

\paragraph{Further Experiments.}
To comprehensively evaluate the generalization capability and practical applicability of SG-OIF, we extend our experiments to three additional challenging benchmarks: CIFAR-100, WebVision, and Clothing1M. As demonstrated in \cref{tab:noisy-further}, SG-OIF consistently outperforms all baseline methods across all three datasets in terms of both Precision@1\% and AUC-PR, while simultaneously achieving superior computational efficiency.

Specifically, on CIFAR-100, SG-OIF achieves 88.5\% P@1\% and 69.8\% AUC-PR, surpassing the second-best method SCE-loss by 4.4 and 3.7 percent, respectively. On the WebVision and Clothing1M, SG-OIF also stands out compared with other methods, demonstrating robust performance even in challenging real-world scenarios.

Remarkably, SG-OIF achieves these performance gains while maintaining the lowest computational overhead across all benchmarks. This indicates that SG-OIF not only excels in noise label identification accuracy but also provides a computationally efficient solution. The consistent superiority across synthetic and real-world noise validates the effectiveness and robustness of our proposed method.
 
\subsection{Out-Of-Distribution Detection (OOD)}
\paragraph{Baseline and Our models}
We comprehensively evaluate our proposed SG-OIF method against 21 SOTA OOD detection approaches across five categories: anomaly detection methods, including CutPaste \cite{li2021cutpaste}, Discriminatively-trained Reconstruction Anomaly Embedding Model (DRAEM) \cite{zavrtanik2021draem}, PatchCore \cite{roth2022towards},  Gram matrix method (Gram) \cite{sastry2020detecting}; energy and gradient-based methods, including Energy-Based OOD (EBO) \cite{liu2020energy}, GradNorm \cite{huang2021importance}, Rectified Activations (ReAct) \cite{sun2021react},  Maximum Logit Score(MLS) \cite{basart2022scaling}, KLM \cite{basart2022scaling}, DICE\cite{sun2022dice}; feature-based methods, including Virtual-logit Matching method (VIM) \cite{wang2022vim}, KNN \cite{sun2022out}; generative approaches, including Generalized Out-of-DIstribution detector for Neural networks (G-ODIN) \cite{hsu2020generalized}, Contrasting Shifted Instances (CSI)\cite{tack2020csi}, Adversarial Reciprocal Point Learning (ARPL) \cite{chen2021adversarial}, Minimum Others Score (MOS) \cite{huang2021mos}, OpenGAN \cite{kong2021opengan}, Virtual Outlier Synthesis (VOS) \cite{du2022vos}; and data augmentation techniques including  Logit Normalization (LogitNorm) \cite{wei2022mitigating}, Unsupervised Dual Grouping (UDG) \cite{yang2021semantically}, Picture-Mixing method (PixMix) \cite{hendrycks2022pixmix}.

The evaluation is conducted on four standard OOD detection benchmarks: MNIST \cite{lecun2002gradient}, CIFAR-10, CIFAR-100, and ImageNet \cite{deng2009imagenet}. We report two primary metrics: Area Under the Receiver Operating Characteristic Curve (AUROC) and Area Under the Precision-Recall Curve (AUPR), along with their averages across all datasets and average classification accuracy. A detailed introduction to baselines, datasets, and evaluation metrics is shown in Appendix.

\paragraph{Comparison with SOTA methods.}
Results in \cref{tab:ood-main} show that SG-OIF demonstrates superior performance across all benchmarks, achieving the highest average AUROC of 89.1\% and AUPR of 95.1\%, which consistently outperforms all 21 baseline methods across diverse OOD detection paradigms. The method exhibits particularly strong performance on MNIST with a 99.8 AUPR score, CIFAR-10 with a 96.7 AUPR score, and ImageNet with a 98.4 AUPR score, while also maintaining competitive results on CIFAR-100.

Furthermore, the generalization capability of SG-OIF is also evidenced by its consistent top-tier performance across all four datasets without significant degradation. This distinguishes our approach from existing methods, since they often exhibit substantial performance variance across different datasets. Our method's effectiveness in precisely identifying out-of-distribution samples while minimizing false positives provides a practical approach toward real-world scenario analysis where reliability and precision are essential.

\subsection{Ablation Study}
\begin{figure}[htbp]
\centering
\includegraphics[width=\linewidth]{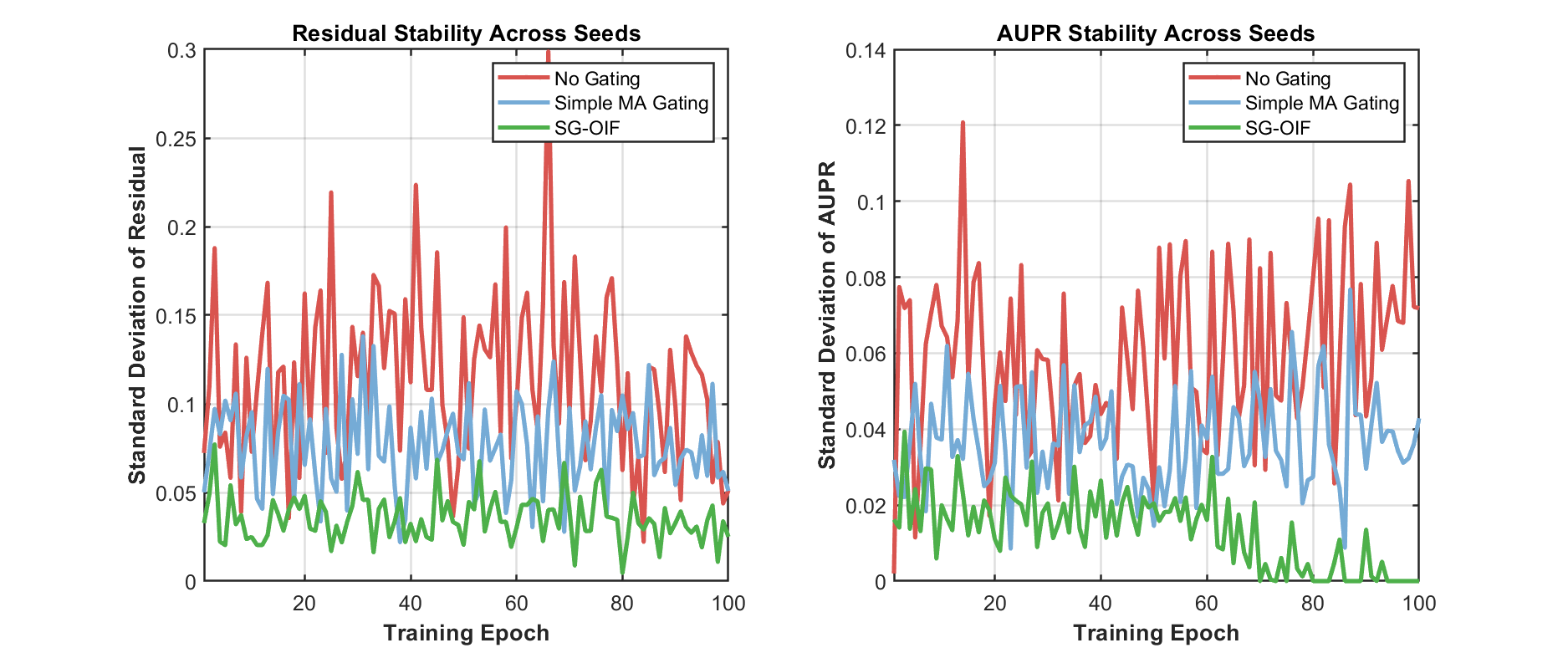}
\caption{\textbf{Reproducibility and Stability.} SG-OIF (green) exhibits lower standard deviation variance (left) and AUPR variance (right) compared to no-gating (red) and simple MA gating (blue), confirming the strong reproducibility and stability of SG-OIF.}
\label{fig:stability}
\end{figure}

\begin{figure}[htbp]
\centering
\includegraphics[width=\linewidth]{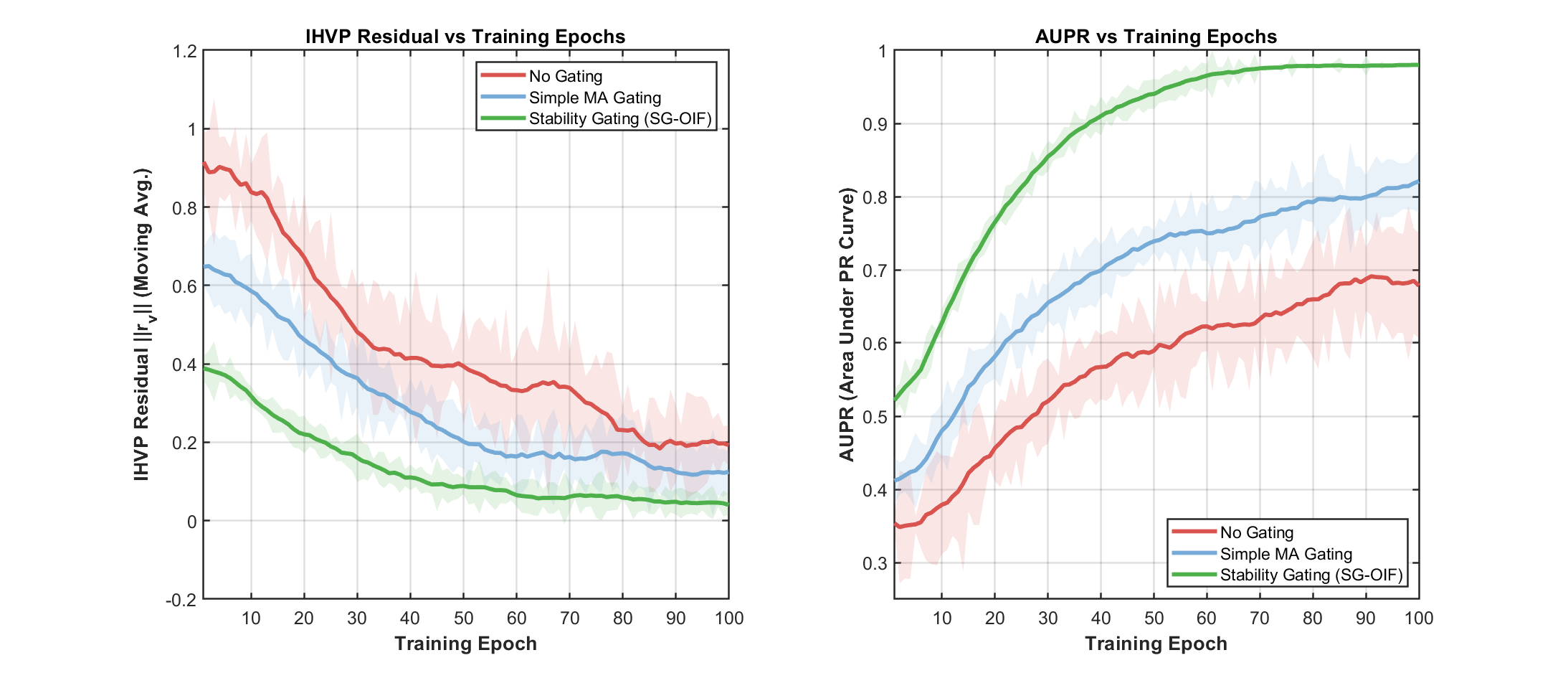}
\caption{\textbf{Effectiveness of Stability-Guided Estimation.} SG-OIF (left) achieves rapid convergence with minimal 
variance, while no-gating exhibits persistent oscillations. SG-OIF (right) outperforms baselines. The improvement validates the effectiveness.}
\label{fig:main_comparison}
\end{figure}

\begin{figure}[htbp]
\centering \includegraphics[width=\linewidth]{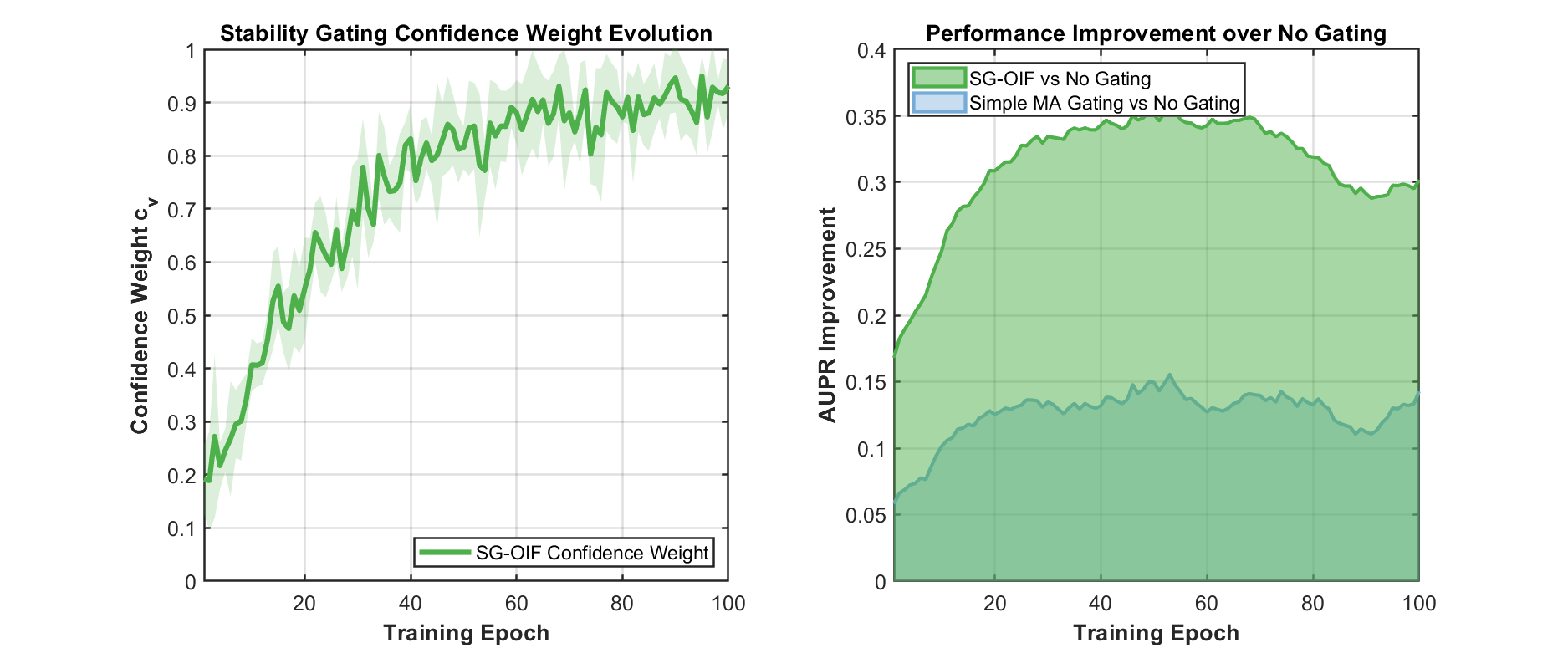}
\caption{\textbf{IHVP Refinement.} 
Left: Confidence weight evolution during training. Right: AUPR improvement over baseline.}
\label{fig:confidence}
\end{figure}

To validate the necessity of each proposed component, we conduct ablation studies on CIFAR-10 and CIFAR-100, both with 20\% synthetic noise and 0 sparsity. \cref{tab:ablation-cifar10} and \cref{tab:ablation-cifar100} systematically validate each component by removing one mechanism at a time. \cref{fig:stability} establishes our model's reproducibility across various seeds. \cref{fig:main_comparison} visualizes the training dynamics underlying these metrics, showing how stability guidance enables both residual convergence (left) and monotonic AUPR improvement (right). In addition, we analyze the function of each proposed component by removing it from the full model one by one. A detailed analysis results report can be found in the following part.

\paragraph{Stability-Guided Influence Estimation.}
Disabling the stability gate causes persistent residual oscillations (\cref{fig:main_comparison} left), severely degrading AUPR by 31.3 points on CIFAR-10 (\cref{tab:ablation-cifar10}) and 33.7 points on CIFAR-100 (\cref{tab:ablation-cifar100}). This dramatic performance collapse underscores the critical role of stability control in influence function estimation. SG-OIF's adaptive thresholding $\tau_t = \kappa \tilde{\beta}_t \Gamma_t$ achieves $\|r_v\| < 0.05$, enabling reliable ranking during early-training saddle-point traversal. The variance reduction (\cref{fig:stability} left) confirms this stability translates to reproducible performance. Simple moving-average gating only partially mitigates instability, lagging under non-stationary noise dynamics.

\paragraph{Curvature Backend.}
\cref{tab:ablation-cifar10} and \cref{tab:ablation-cifar100} show that removing sketch-based PCG obviously reduces both AUPR and P@1\%, confirming that accurate inverse curvature is essential. The sketch-based preconditioning enables efficient low-rank approximation of the curvature matrix while maintaining sufficient expressiveness to capture critical second-order information. Insufficient rank underfits the influence direction, and excessive rank adds overhead with negligible gains. This demonstrates that our curvature backend rank is the optimal balance in the accuracy-efficiency trade-off.

\paragraph{Confidence Calibration in IHVP Refinement.}
Removing the confidence mechanism rapidly degrades AUPR by 30.7 points on CIFAR-10 (\cref{tab:ablation-cifar10}) and 33.1 points on CIFAR-100 (\cref{tab:ablation-cifar100}), demonstrating its fundamental importance for robust influence estimation. \cref{fig:confidence} reveals that confidence weights enables SG-OIF to achieve +23.7\% cumulative AUPR improvement compared with baseline, MA gating. The later divergence coincides with confidence saturation, where $c_v$-weighted scoring $\hat{I}(z_i, v) = -c_v \phi_v^\top g_i$ suppresses outliers. The mechanism's effectiveness stems from its ability to dynamically assess estimation reliability: during early training, low-confidence scores prevent premature ranking decisions, while in later stages, high-confidence estimates enable decisive sample selection. Removing temperature scaling further degrades AUPR by 32.2 points, confirming calibration is critical for converting raw influence into reliable probabilities.

\begin{table}[htbp]
\centering
\scriptsize
\caption{Ablation Study on CIFAR-10.}
\begin{adjustbox}{width=0.45\textwidth}
\begin{tabular}{l|ccc}
\toprule
Variant & AUPR & P@1\% & Overhead \\
\midrule
\rowcolor{gray!20}\textbf{SG-OIF (Full)} & $\mathbf{91.1}$ & $\mathbf{90.2}$ & $1.24\times$ \\
\midrule[0.6pt]
w/o Stability Gate & $59.8$ & $73.4$ & $1.22\times$ \\
w/o Moving-Avg Gate & $60.7$ & $74.6$ & $1.23\times$ \\
w/o Sketch PCG & $59.9$ & $73.2$ & $1.15\times$ \\
w/o Confidence Calib. & $60.4$ & $74.5$ & $1.21\times$ \\
w/o Temperature & $58.9$ & $73.0$ & $1.00\times$ \\
\bottomrule
\end{tabular}
\end{adjustbox}
\label{tab:ablation-cifar10}
\end{table}

\begin{table}[htbp]
\centering
\scriptsize
\caption{Ablation Study on CIFAR-100.}
\begin{adjustbox}{width=0.45\textwidth}
\begin{tabular}{l|ccc}
\toprule
Variant & AUPR & P@1\% & Overhead \\
\midrule
\rowcolor{gray!20}\textbf{SG-OIF (Full)} & $\mathbf{69.8}$ & $\mathbf{88.5}$ & $1.26\times$ \\
\midrule[0.6pt]
w/o Stability Gate & $36.1$ & $51.2$ & $1.23\times$ \\
w/o Moving-Avg Gate & $37.0$ & $52.1$ & $1.24\times$ \\
w/o Sketch PCG & $35.2$ & $50.4$ & $1.16\times$ \\
w/o Confidence Calib. & $36.7$ & $52.5$ & $1.22\times$ \\
w/o Temperature & $35.4$ & $50.9$ & $1.00\times$ \\
\bottomrule
\end{tabular}
\end{adjustbox}
\label{tab:ablation-cifar100}
\end{table}
\section{Conclusion}
\label{sec:conclusion}
In this work, we tackle the under-explored challenge of reliable online influence estimation. We propose Stability-Guided Online Influence (SG-OIF), the first framework that produces per-example influence scores during training. We calibrate the influence via stability-guided estimation, maintain anchor-wise inverse curvature directions structural backend, and refine the inverse Hessian vector product at the same time. Extensive experiments across various datasets show that SG-OIF outperforms existing SOTA methods in noisy label and out-of-distribution detection, demonstrating the effectiveness of using algorithm stability as a real-time controller for detecting data influences. We believe that our work will provide a trustworthy path toward building reliable vision datasets in the near future.

\section{Acknowledgment}
\label{sec:acknowledgment}
This work was supported in part by U.S. NIH grants R35GM158094 and R01GM134020, NSF grants DBI-2238093, DBI-2422619, IIS-2211597, and MCB-2205148.

{
    \small
    \bibliographystyle{ieeenat_fullname}
   \bibliography{main}
}

\clearpage
\setcounter{page}{1}
\maketitlesupplementary

\section*{Appendix}
\label{sec:appendix}

\subsection{Mathematical Proofs}
\begin{table*}[t]
\centering
\caption{Assumptions and Notations for Influence Function Estimation with Curvature Backends}
\begin{adjustbox}{width=1.0\textwidth}
\label{tab:assumptions}
\resizebox{\textwidth}{!}{
\begin{tabular}{@{}lp{12cm}@{}}
\toprule
\textbf{Component} & \textbf{Description and Assumptions} \\
\midrule
\multicolumn{2}{l}{\textit{\textbf{Training and Loss}}} \\
\midrule
Model parameters & $\theta \in \mathbb{R}^d$: model parameters in $d$-dimensional space \\
Loss smoothness & Per-sample loss $\ell(\theta; z)$ is $L$-smooth in $\theta$ (first-order Lipschitz gradient) \\
Hessian moment & Second moment of Hessian is $M$-bounded: $\mathbb{E}[\|\nabla^2 \ell(\theta; z)\|^2] \leq M^2$ \\
PL condition & Polyak-Lojasiewicz condition: $\|\nabla L(\theta)\|^2 \geq 2\mu(L(\theta) - L^*)$ \\
 & where $L(\theta) = \mathbb{E}_z[\ell(\theta; z)]$ is the population loss and $L^*$ is the optimal loss \\
Optimization & Mini-batch SGD with weight decay $\lambda_w \geq 0$ and learning rate $\eta_t$ at iteration $t$ \\
\midrule
\multicolumn{2}{l}{\textit{\textbf{Curvature and Condition Number}}} \\
\midrule
Curvature proxy & $H_t$: curvature matrix at iteration $t$ (backend: Diag/Fisher/K-FAC/LRD) \\
Spectral bounds & $0 < m \leq \lambda_{\min}(H_t) \leq \lambda_{\max}(H_t) \leq M$ \\
Condition number & $\kappa_H = M/m$: condition number of the curvature proxy \\
\midrule
\multicolumn{2}{l}{\textit{\textbf{IHVP and Anchors}}} \\
\midrule
Test gradient & $v$: test vector for influence computation, e.g., $v = \nabla_\theta \ell_{\text{test}}$ \\
IHVP solution & $\varphi_v \approx H^{-1} v$: inverse Hessian-vector product (IHVP) via $s$-test \\
Anchor set & $K$ anchors $\{v_1, \ldots, v_K\}$ with corresponding IHVPs $\{\varphi_{v_1}, \ldots, \varphi_{v_K}\}$ \\
IHVP matrix & $\Phi = [\varphi_{v_1}, \ldots, \varphi_{v_K}] \in \mathbb{R}^{d \times K}$: stacked IHVP vectors \\
Gram matrix & $G = \Phi^\top \Phi \in \mathbb{R}^{K \times K}$: Gram matrix of IHVPs \\
\midrule
\multicolumn{2}{l}{\textit{\textbf{Stability Proxy}}} \\
\midrule
Stability coefficient & $\tilde{\beta}_t = \gamma_1 \frac{\eta_t \bar{G}_t}{n} + \gamma_2 \frac{\lambda_w}{n}$ \\
 & where $\bar{G}_t$ is the average gradient norm, $n$ is the dataset size, $\gamma_1, \gamma_2$ are constants \\
Condition proxy & $\Gamma_t$: proxy for the condition number at iteration $t$ \\
Tolerance & $\tau_t = \kappa \tilde{\beta}_t \Gamma_t$, where $\kappa$ is a tunable constant \\
\midrule
\multicolumn{2}{l}{\textit{\textbf{Confidence Gating and Influence Score}}} \\
\midrule
Residual & $r_v = g_v - H_t \varphi_v$: approximation residual for anchor $v$ \\
 & where $g_v$ is the gradient w.r.t. anchor $v$ \\
Confidence & $c_v = \text{clip}\left(1 - \frac{\|r_v\|}{\tau_t}, 0, 1\right)$: confidence score for anchor $v$ \\
Single-anchor influence & $\hat{I}(z_i, v) = -c_v \varphi_v^\top g_i$: influence of training sample $z_i$ on anchor $v$ \\
 & where $g_i = \nabla_\theta \ell(\theta; z_i)$ is the gradient of sample $z_i$ \\
Multi-anchor aggregation & $\hat{I}(z_i) = \sum_{v} w_v \hat{I}(z_i, v)$: aggregated influence across all anchors \\
Anchor weights & $w_v = \frac{c_v}{\sum_{u} c_u}$: normalized confidence weights \\
\bottomrule
\end{tabular}
}
\end{adjustbox}
\end{table*}
For convenient proof writing, we make an \textbf{Assumption-Notation} map (\cref{tab:assumptions}), where you can find the assumption that we use in the model. We will prove them in the following part. Here, you can find the meaning of the notations we use.

\subsubsection{Error Decomposition and Ranking Consistency}

\begin{lemma}[Influence Error Decomposition]
\label{lem:error-decomp}
Let $H \succ 0$ denote the curvature operator at a fixed training time with $\lambda_{\min}(H) \geq m > 0$, and let $v \in \mathbb{R}^d$ be the target vector. For a sample $z$, define the true single-anchor influence as
\begin{equation}
I_z := -v^\top H^{-1} g_z,
\end{equation}
where $g_z = \nabla_\theta \ell(\theta; z)$. Let the estimated multi-anchor score be
\begin{equation}
\label{eq:multi-anchor-estimator}
\hat{I}_z := \sum_{a \in \mathcal{V}} w_a \hat{I}(z, a), 
\quad \text{where} \quad
\begin{cases}
\hat{I}(z, a) := -c_a \varphi_a^\top g_z, \\[0.5ex]
w_a := \dfrac{c_a}{\sum_{u \in \mathcal{V}} c_u}.
\end{cases}
\end{equation}
where $\varphi_a$ is an approximate IHVP for anchor $a$ with target $v_a$, the residual is $r_a := g_a - H\varphi_a$, and the confidence is $c_a := \mathrm{clip}(1 - \|r_a\|/\tau, 0, 1)$. 

Let $\Phi := [\varphi_1, \ldots, \varphi_K]$ denote the matrix of $\ell_2$-normalized anchors and $G := \Phi^\top \Phi$ the Gram matrix. Then there exist constants $C_1, C_2, C_3 > 0$ such that
\begin{equation}
\label{eq:error-bound_1}
|\hat{I}_z - I_z| \leq C_1 \mathcal{E}_{\mathrm{IHVP}} + C_2 \mathcal{E}_{\mathrm{cover}} + C_3 \mathcal{E}_{\mathrm{mod}},
\end{equation}
where
\begin{align}
\mathcal{E}_{\mathrm{IHVP}} &:= \min_{\alpha \in \mathbb{R}^K} \|H^{-1}v - \Phi\alpha\| + \frac{1}{m} \max_{a \in \mathcal{V}} \|r_a\|, \\
\mathcal{E}_{\mathrm{cover}} &:= \frac{1}{\sqrt{\lambda_{\min}(G)}} \|g_z - P_\Phi g_z\|, \\
\mathcal{E}_{\mathrm{mod}} &:= |1 - \bar{c}| \cdot |\bar{\varphi}^\top g_z| + w,
\end{align}
with $\bar{c} := \sum_a w_a c_a$, $\bar{\varphi} := \sum_a w_a \varphi_a$, and $w \leq \left(\max_a |c_a - \bar{c}|\right) \cdot \|g_z\| \cdot \left(\sum_a |w_a| \|\varphi_a\|\right)$.
\end{lemma}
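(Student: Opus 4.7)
The plan is to start from the exact identity
\begin{equation*}
\hat{I}_z - I_z = \bigl(H^{-1}v - \textstyle\sum_{a\in\mathcal{V}} w_a c_a\,\varphi_a\bigr)^{\!\top} g_z,
\end{equation*}
which follows immediately from the definitions in \eqref{eq:multi-anchor-estimator} together with $I_z=-(H^{-1}v)^\top g_z$. I will then introduce three reference vectors -- the confidence-averaged anchor mean $\bar\varphi$, the optimal span-$\Phi$ approximation $\Phi\alpha^{\star}$ to $H^{-1}v$, and the projection $P_\Phi g_z$ of $g_z$ onto the anchor subspace -- and add/subtract them to separate the error into the three named pieces $\mathcal{E}_{\mathrm{IHVP}},\mathcal{E}_{\mathrm{cover}},\mathcal{E}_{\mathrm{mod}}$.

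First I would extract the modulation contribution. Writing $c_a = \bar c + (c_a-\bar c)$ gives
\begin{equation*}
\textstyle\sum_a w_a c_a\varphi_a \;=\; \bar c\,\bar\varphi \;+\; \sum_a w_a(c_a-\bar c)\varphi_a.
\end{equation*}
Substituting in the identity and regrouping yields
\begin{equation*}
\hat{I}_z - I_z = (H^{-1}v - \bar\varphi)^{\!\top}g_z + (1-\bar c)\,\bar\varphi^{\!\top}g_z - \textstyle\sum_a w_a(c_a-\bar c)\varphi_a^{\!\top}g_z,
\end{equation*}
and Cauchy--Schwarz on the last term, combined with $\sum_a|w_a|=1$, gives exactly the bound $w\le(\max_a|c_a-\bar c|)\|g_z\|\sum_a|w_a|\|\varphi_a\|$ that appears in $\mathcal{E}_{\mathrm{mod}}$, while the middle term gives the remaining $|1-\bar c|\cdot|\bar\varphi^{\!\top}g_z|$ contribution.

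Next I would attack the leading term $(H^{-1}v-\bar\varphi)^{\!\top}g_z$ by inserting the orthogonal decomposition $g_z = P_\Phi g_z + (I-P_\Phi)g_z$. Because $\bar\varphi\in\mathrm{span}(\Phi)$, the cross term $\bar\varphi^{\!\top}(I-P_\Phi)g_z$ vanishes, so the orthogonal piece reduces to $(H^{-1}v)^{\!\top}(I-P_\Phi)g_z$; I will bound this by representing $H^{-1}v$ through its best anchor expansion $\Phi\alpha^{\star}$ plus the residual $H^{-1}v-\Phi\alpha^{\star}$, and using the Gram-matrix relation $\|\Phi\alpha\|^2=\alpha^\top G\alpha\ge\lambda_{\min}(G)\|\alpha\|^2$ together with the representation inequality \eqref{eq:residual} of the main text to surface the $\lambda_{\min}(G)^{-1/2}$ factor that defines $\mathcal{E}_{\mathrm{cover}}$. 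The in-subspace piece $(H^{-1}v-\bar\varphi)^{\!\top}P_\Phi g_z$ is handled by Cauchy--Schwarz and the triangle inequality $\|H^{-1}v-\bar\varphi\|\le\|H^{-1}v-\Phi\alpha^{\star}\|+\|\Phi\alpha^{\star}-\bar\varphi\|$; the first summand is the leading part of $\mathcal{E}_{\mathrm{IHVP}}$, and the second is controlled by rewriting $\Phi\alpha^{\star}-\bar\varphi$ in terms of the per-anchor residuals using $\varphi_a = H^{-1}g_a - H^{-1}r_a$, which via $\lambda_{\min}(H)\ge m$ produces the $m^{-1}\max_a\|r_a\|$ term.

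The main obstacle will be Step~3: cleanly propagating the anchor-level residuals $\{r_a\}$ through the best-fit expansion $\Phi\alpha^{\star}$ so that they appear additively as $m^{-1}\max_a\|r_a\|$ rather than being amplified by $\|\alpha^{\star}\|_1$ or an extra $\lambda_{\min}(G)$ factor. I expect this to require choosing $\alpha^{\star}$ as the least-squares solution $G^{-1}\Phi^\top H^{-1}v$ and bounding $\|\Phi\alpha^{\star}-\tilde\Phi\alpha^{\star}\|$, where $\tilde\Phi=[H^{-1}g_a]_a$ is the idealised anchor matrix, via the operator-norm bound $\|\Phi-\tilde\Phi\|\le m^{-1}\max_a\|r_a\|$ and absorbing the remaining spectral constants into the universal $C_1,C_2,C_3$. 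Once this step is handled, assembling the three pieces and taking absolute values yields \eqref{eq:error-bound_1} directly.
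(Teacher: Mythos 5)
Your regrouping of $\hat I_z - I_z$ via $\bar c$ and $\bar\varphi$ --- peeling off $\sum_a w_a c_a\varphi_a = \bar c\,\bar\varphi + \sum_a w_a(c_a-\bar c)\varphi_a$ and then attacking $(H^{-1}v-\bar\varphi)^{\top}g_z$ --- is essentially the paper's Step 3, and your form of the remainder (keeping $(1-\bar c)\bar\varphi^{\top}g_z$ instead of $(1-\bar c)(H^{-1}v)^{\top}g_z$) actually matches the stated $\mathcal E_{\mathrm{mod}}$ more closely than the paper's own algebra does. Your route to $\mathcal E_{\mathrm{cover}}$ --- splitting $g_z=P_\Phi g_z+(I-P_\Phi)g_z$ and noting $\bar\varphi^{\top}(I-P_\Phi)g_z=0$ --- is a different path from the paper (which applies the Gram--projection lemma to $H^{-1}g_z$ rather than to $g_z$), but it is sound and in fact reproduces the $\|g_z-P_\Phi g_z\|$ factor exactly as the lemma writes it.

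The genuine gap is precisely where you flag it, and your proposed fix does not close it. You need $\|\bar\varphi - H^{-1}v\|$ controlled by $\min_\alpha\|H^{-1}v-\Phi\alpha\| + m^{-1}\max_a\|r_a\|$, and you propose to split off $\|\Phi\alpha^{\star} - \bar\varphi\|$ and bound it by propagating residuals through $\varphi_a = H^{-1}g_a - H^{-1}r_a$. Writing $\bar\varphi = \tilde\Phi w - H^{-1}\sum_a w_a r_a$ with $\tilde\Phi := [H^{-1}g_a]_a$ gives
\begin{equation*}
\Phi\alpha^{\star} - \bar\varphi \;=\; (\Phi-\tilde\Phi)\alpha^{\star} \;+\; \tilde\Phi(\alpha^{\star} - w) \;+\; H^{-1}\textstyle\sum_a w_a r_a .
\end{equation*}
The first and third pieces are indeed $O(m^{-1}\max_a\|r_a\|)$ up to $\|\alpha^{\star}\|$ and $K$, but the middle piece $\tilde\Phi(\alpha^{\star}-w)$ is controlled by none of $\mathcal E_{\mathrm{IHVP}}$, $\mathcal E_{\mathrm{cover}}$, $\mathcal E_{\mathrm{mod}}$: it survives even when every residual vanishes and every confidence equals one, because the confidence weights $w_a$ need not coincide with the least-squares coefficients $\alpha^{\star}$. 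Absorbing $\|\alpha^{\star}\|$ into $C_1$ is also not safe, since $\alpha^{\star}$ depends on $v$ and on $\lambda_{\min}(G)$ and so is not a universal constant. For reference, the paper's proof hits the same wall and clears it with the one-line claim that $\|\bar\varphi - H^{-1}v\|\le\min_\alpha\|\Phi\alpha - H^{-1}v\|$ ``since $\bar\varphi\in\mathrm{span}(\Phi)$''; that inequality is reversed (a particular member of the span is at least as far from $H^{-1}v$ as the best-fit member), so the paper's argument has the identical gap, just hidden in a misstated inequality. To make this step rigorous you would need either an assumption tying $w$ to the least-squares solution, or an explicit additional bias term for $\tilde\Phi(\alpha^{\star}-w)$ in $\mathcal E_{\mathrm{IHVP}}$.
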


\begin{proof}
\textbf{Step 1: Single-anchor deviation.}
Fix an anchor $a \in \mathcal{V}$. The single-anchor estimation error admits the decomposition
\begin{align}
\label{eq:single-anchor-error}
\hat{I}(z, a) - I_z 
&= -c_a \varphi_a^\top g_z + v^\top H^{-1} g_z \nonumber\\
&= -c_a (\varphi_a - H^{-1}v)^\top g_z - (c_a - 1)(H^{-1}v)^\top g_z.
\end{align}
Applying the Cauchy--Schwarz inequality yields
\begin{equation}
\label{eq:single-anchor-bound}
\begin{aligned}
|\hat{I}(z, a) - I_z| 
&\leq c_a \|\varphi_a - H^{-1}v\| \cdot \|g_z\| \\
&\quad + |1 - c_a| \cdot \|H^{-1}v\| \cdot \|g_z\|.
\end{aligned}
\end{equation}
To relate $\|\varphi_a - H^{-1}v\|$ to the residual $r_a$, observe that
\begin{equation}
\label{eq:residual-equation}
H(\varphi_a - H^{-1}v) = H\varphi_a - v = -(v - H\varphi_a) = -r_a.
\end{equation}
where,
\begin{equation}
\label{eq:residual-bound}
\begin{aligned}
\|\varphi_a - H^{-1}v\| 
&= \|H^{-1}(-r_a)\| \\
&\leq \|H^{-1}\| \cdot \|r_a\| \\
&\leq \frac{1}{m} \|r_a\|,
\end{aligned}
\end{equation}
where the last inequality follows from $\|H^{-1}\| = 1/\lambda_{\min}(H) \leq 1/m$.

Substituting \cref{eq:residual-bound} into \cref{eq:single-anchor-bound}, we obtain
\begin{equation}
\label{eq:per-anchor-bound}
|\hat{I}(z, a) - I_z| \leq \frac{c_a \|g_z\|}{m} \|r_a\| + |1 - c_a| \|H^{-1}\| \|g_z\|.
\end{equation}

\textbf{Step 2: Anchor span approximation and coverage.}
Define $\alpha^\star := \arg\min_\alpha \|H^{-1}v - \Phi\alpha\|$ and the approximation residual $\delta_v := H^{-1}v - \Phi\alpha^\star$. By the triangle inequality,
\begin{equation}
\|\varphi_a - H^{-1}v\| \leq \|\varphi_a - \Phi\alpha^\star\| + \|\delta_v\|.
\end{equation}
Since $\varphi_a \in \mathrm{span}(\Phi)$, we have 
\begin{equation}
\label{eq:anchor-span-bound}
\|\varphi_a - \Phi\alpha^\star\| \leq \max_b \|\varphi_b - \Phi\alpha^\star\|.
\end{equation}
Incorporating the span-approximation error $\|\delta_v\|$ into $\min_\alpha \|H^{-1}v - \Phi\alpha\|$ and using $0 \leq c_a \leq 1$, \cref{eq:per-anchor-bound} implies
\begin{equation}
\label{eq:span-bound}
\begin{aligned}
|\hat{I}(z, a) - I_z| 
&\leq \frac{\|g_z\|}{m} \max_{a} \|r_a\| \\
&\quad + \|g_z\| \cdot \min_\alpha \|H^{-1}v - \Phi\alpha\| \\
&\quad + |1 - c_a| \|H^{-1}\| \|g_z\|.
\end{aligned}
\end{equation}
This establishes the $\mathcal{E}_{\mathrm{IHVP}}$ component.

For the coverage term, let $P_\Phi := \Phi(\Phi^\top\Phi)^{-1}\Phi^\top$ denote the orthogonal projector onto $\mathrm{span}(\Phi)$. By \cref{lem:projection-bound} (proved in \cref{app:projection}), for any $x \in \mathbb{R}^d$,
\begin{equation}
\label{eq:projection-inequality}
\|x - P_\Phi x\| \leq \frac{1}{\sqrt{\lambda_{\min}(G)}} \cdot \min_\alpha \|x - \Phi\alpha\|.
\end{equation}
Applying \cref{eq:projection-inequality} with $x = H^{-1}g_z$ yields
\begin{equation}
\label{eq:coverage-bound}
\|H^{-1}g_z - P_\Phi H^{-1}g_z\| \leq \frac{1}{\sqrt{\lambda_{\min}(G)}} \cdot \min_\alpha \|H^{-1}g_z - \Phi\alpha\|,
\end{equation}
which defines $\mathcal{E}_{\mathrm{cover}}$.

\textbf{Step 3: Multi-anchor aggregation and modulation bias.}
Define the weighted averages $\bar{\varphi} := \sum_{a} w_a \varphi_a$ and $\bar{c} := \sum_a w_a c_a$. Note that $\sum_a w_a = 1$ by construction. The multi-anchor estimator \cref{eq:multi-anchor-estimator} can be rewritten as
\begin{equation}
\label{eq:multi-anchor-rewrite}
\begin{aligned}
\hat{I}_z 
&= -\sum_a w_a c_a \varphi_a^\top g_z \\
&= -\bar{c} \bar{\varphi}^\top g_z - \sum_a w_a (c_a - \bar{c}) \varphi_a^\top g_z.
\end{aligned}
\end{equation}

Adding and subtracting the true influence $-v^\top H^{-1}g_z$, we obtain
\begin{equation}
\label{eq:multi-anchor-decomp}
\begin{aligned}
\hat{I}_z - I_z 
&= -\bar{c}(\bar{\varphi} - H^{-1}v)^\top g_z \\
&\quad - (\bar{c} - 1)(H^{-1}v)^\top g_z \\
&\quad - \sum_a w_a (c_a - \bar{c}) \varphi_a^\top g_z.
\end{aligned}
\end{equation}
Applying the triangle inequality to \cref{eq:multi-anchor-decomp},
\begin{equation}
\label{eq:multi-anchor-bound}
\begin{aligned}
|\hat{I}_z - I_z| 
&\leq \|g_z\| \Big( \bar{c} \|\bar{\varphi} - H^{-1}v\| + |1 - \bar{c}| \|H^{-1}\| \\
&\quad\quad\quad + \sum_a |w_a| |c_a - \bar{c}| \|\varphi_a\| \Big).
\end{aligned}
\end{equation}

Since $\bar{\varphi} \in \mathrm{span}(\Phi)$, we have
\begin{equation}
\|\bar{\varphi} - H^{-1}v\| \leq \min_\alpha \|\Phi\alpha - H^{-1}v\|.
\end{equation}
Combining this with \cref{eq:coverage-bound} and defining
\begin{align}
C_1 &:= \|g_z\|, \\
C_2 &:= \|g_z\|, \\
C_3 &:= \max\{\|H^{-1}\| \|g_z\|, \|g_z\| \cdot \max_a \|\varphi_a\|\},
\end{align}
and $w := \left(\max_a |c_a - \bar{c}|\right) \cdot \|g_z\| \cdot \left(\sum_a |w_a| \|\varphi_a\|\right)$, we recover \cref{eq:error-bound}.
\end{proof}

\textbf{Remark.}
When $c_a \approx 1$ for all $a$ and $\lambda_{\min}(G)$ remains bounded away from zero, the dominant error terms are the span approximation $\min_\alpha \|H^{-1}v - \Phi\alpha\|$ and the projection mismatch $\|g_z - P_\Phi g_z\|$, while the residual term $\max_a \|r_a\|/m$ vanishes as IHVP refinement converges.

\begin{lemma}[Top-$K$ Order Preservation]
\label{lem:topk-preservation}
Let $\mathcal{S}$ be a finite sample set with true influences $\{I_z\}_{z \in \mathcal{S}}$ and estimates $\{\hat{I}_z\}_{z \in \mathcal{S}}$. For $K \in \{1, \ldots, |\mathcal{S}|\}$, define the margin
\begin{equation}
\gamma_K := \min_{\substack{z \in \mathrm{Top}_K(I) \\ u \notin \mathrm{Top}_K(I)}} (I_z - I_u) > 0.
\end{equation}
If $\sup_{z \in \mathcal{S}} |\hat{I}_z - I_z| \leq \varepsilon$ and $\varepsilon < \gamma_K/2$, then $\mathrm{Top}_K(\hat{I}) = \mathrm{Top}_K(I)$.
\end{lemma}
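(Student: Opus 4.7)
The plan is to use a direct $\ell_\infty$ perturbation argument: a uniform estimation error smaller than half the separation margin cannot flip any pair across the top-$K$ boundary. First I would fix an arbitrary pair $z \in \mathrm{Top}_K(I)$ and $u \notin \mathrm{Top}_K(I)$, and rewrite the estimated gap via the telescoping identity
\begin{equation}
\hat{I}_z - \hat{I}_u \;=\; (I_z - I_u) \;+\; (\hat{I}_z - I_z) \;-\; (\hat{I}_u - I_u).
\end{equation}
By definition of $\gamma_K$, the first term is at least $\gamma_K$. The uniform bound $\sup_{w \in \mathcal{S}} |\hat{I}_w - I_w| \le \varepsilon$ controls the last two terms in absolute value by $\varepsilon$ each, giving
\begin{equation}
\hat{I}_z - \hat{I}_u \;\ge\; \gamma_K - 2\varepsilon \;>\; 0
\end{equation}
under the hypothesis $\varepsilon < \gamma_K / 2$.

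Next I would upgrade this pairwise inequality to a set-level conclusion. The strict inequality $\hat{I}_z > \hat{I}_u$ holds for every $z \in \mathrm{Top}_K(I)$ and every $u \in \mathcal{S} \setminus \mathrm{Top}_K(I)$ simultaneously, so
\begin{equation}
\min_{z \in \mathrm{Top}_K(I)} \hat{I}_z \;>\; \max_{u \notin \mathrm{Top}_K(I)} \hat{I}_u.
\end{equation}
Any size-$K$ subset whose scores majorize those of its complement must therefore coincide with $\mathrm{Top}_K(I)$, so $\mathrm{Top}_K(\hat{I}) = \mathrm{Top}_K(I)$.

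The only subtlety, rather than a genuine obstacle, is handling well-definedness of the top-$K$ operator under ties. The margin hypothesis $\gamma_K > 0$ already guarantees that $\mathrm{Top}_K(I)$ is unambiguous as a set, and the strict inequality in the displayed separation above likewise rules out boundary ties in $\hat{I}$ between in-top and out-of-top elements, so the identification is forced without any tie-breaking convention. I would close by noting that the bound is tight up to the factor of two: a perturbation of magnitude $\gamma_K/2$ on two boundary items with opposite signs can swap their ranks, so $\varepsilon < \gamma_K/2$ is the sharpest uniform-error threshold compatible with guaranteed order preservation.
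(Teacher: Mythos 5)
Your proof is correct and follows essentially the same pairwise comparison argument as the paper: fix a pair $(z,u)$ across the top-$K$ boundary, use the margin $\gamma_K$ and the uniform error bound to show $\hat{I}_z > \hat{I}_u$, then conclude the sets coincide. The telescoping rewrite is a cosmetic variant of the paper's chained inequalities, and your added remarks on ties and tightness of the $\gamma_K/2$ threshold are nice but not a different route.
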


\begin{proof}
Fix arbitrary $z \in \mathrm{Top}_K(I)$ and $u \notin \mathrm{Top}_K(I)$. By definition of $\gamma_K$,
\begin{equation}
I_z - I_u \geq \gamma_K.
\end{equation}
By the uniform approximation hypothesis,
\begin{equation}
\hat{I}_z \geq I_z - \varepsilon \geq I_u + \gamma_K - \varepsilon > I_u + \varepsilon \geq \hat{I}_u,
\end{equation}
where the strict inequality holds since $\varepsilon < \gamma_K/2$. Thus $z$ outranks $u$ under $\hat{I}$. Since this holds for all pairs $(z, u)$ with $z \in \mathrm{Top}_K(I)$ and $u \notin \mathrm{Top}_K(I)$, we conclude $\mathrm{Top}_K(\hat{I}) = \mathrm{Top}_K(I)$.
\end{proof}

\subsubsection{Probability Bounds under Confidence Gating}

\begin{lemma}[Probability with Confidence-Weighted Estimation]
\label{lem:misranking-prob}
Fix two samples $z, u$ with true influence difference $\Delta := I_z - I_u > 0$. Let $\hat{\Delta} := \hat{I}_z - \hat{I}_u$ denote the estimator formed by averaging $m$ stochastic probe contributions per anchor update. Suppose each probe produces an additive zero-mean noise $X_j$, and confidence gating multiplies each probe by a random factor $c^{(j)} \in [0, 1]$, independent of $X_j$, conditional on the past. Let $b := |\mathbb{E}[\hat{\Delta}] - \Delta|$ denote the absolute bias from approximation, coverage, and modulation (as in \cref{lem:error-decomp}). Then for any $\alpha \in (0, 1)$,
\begin{equation}
\label{eq:misranking-prob}
\mathbb{P}(\hat{\Delta} \leq 0) \leq \exp\left(-\frac{(\Delta - b)^2}{2\tilde{\sigma}^2/m}\right),
\end{equation}
where $\tilde{\sigma}^2 := \mathbb{E}[c^2] \sigma^2$ and $\tilde{\sigma}^2 := \sup_j \mathbb{E}[(c^{(j)})^2] \sigma^2$).

Moreover, an empirical-Bernstein confidence interval for $\hat{I}_z$ satisfies
\begin{equation}
\label{eq:bernstein-ci}
\mathbb{P}\left(|\hat{I}_z - \bar{I}_z| \geq W_z(\alpha)\right) \leq \alpha,
\end{equation}
with half-width
\begin{equation}
\label{eq:half-width}
W_z(\alpha) = \sqrt{\frac{2\hat{V}_z \ln(3/\alpha)}{m}} + \frac{3\tilde{B}\ln(3/\alpha)}{m},
\end{equation}
where $\bar{I}_z$ is the sample mean over $m$ probes, $\hat{V}_z$ is the sample variance, and $\tilde{B} := \sup_j |c^{(j)} X_j| \leq B$.
\end{lemma}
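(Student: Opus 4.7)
The plan is to reduce the mis-ranking event $\{\hat{\Delta}\le 0\}$ to a one-sided concentration of a bounded martingale-difference sum, and then obtain the confidence interval by applying an empirical-Bernstein inequality to the gated probe sequence. First I would write the estimator as a sample average $\hat{\Delta}=\frac{1}{m}\sum_{j=1}^{m}Y^{(j)}$ with $Y^{(j)}:=c^{(j)}X^{(j)}$, where $X^{(j)}$ is the zero-mean probe contribution to $\hat{I}_z-\hat{I}_u$ at step $j$ and $c^{(j)}\in[0,1]$ is the confidence gate, independent of $X^{(j)}$ given the filtration $\mathcal{F}_{j-1}$. The bias hypothesis $|\EE[\hat{\Delta}]-\Delta|\le b$ gives $\EE[\hat{\Delta}]\ge \Delta-b$, so whenever $b<\Delta$,
\begin{equation*}
\{\hat{\Delta}\le 0\}\ \subseteq\ \{\hat{\Delta}-\EE[\hat{\Delta}]\le -(\Delta-b)\}.
\end{equation*}

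Next I would control the per-step conditional variance. Conditional independence of $c^{(j)}$ and $X^{(j)}$ gives $\mathrm{Var}(Y^{(j)}\mid\mathcal{F}_{j-1})=\EE[(c^{(j)})^2\mid\mathcal{F}_{j-1}]\cdot\mathrm{Var}(X^{(j)}\mid\mathcal{F}_{j-1})\le \tilde{\sigma}^2$, so the recentered sequence $\{Y^{(j)}-\EE[Y^{(j)}\mid\mathcal{F}_{j-1}]\}$ is a bounded martingale-difference sequence whose average is sub-Gaussian with proxy $\tilde{\sigma}^2/m$. Applying Azuma-Hoeffding (or a conditional sub-Gaussian MGF argument) at threshold $t=\Delta-b$ then yields the claimed exponential bound $\exp\bigl(-(\Delta-b)^2/(2\tilde{\sigma}^2/m)\bigr)$.

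For the confidence interval, I would invoke the Maurer-Pontil empirical-Bernstein inequality on $\{Y^{(j)}\}_{j=1}^m$, which is bounded in absolute value by $\tilde{B}$ (since $c^{(j)}\le 1$ and $|X^{(j)}|\le B$) with empirical variance $\hat{V}_z$. Splitting the target failure probability $\alpha$ into one Bernstein tail and two variance-estimation tails (each of mass $\alpha/3$) and union-bounding reproduces exactly the two-term half-width in \cref{eq:half-width}.

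The hard part will be justifying the conditional independence of $c^{(j)}$ and $X^{(j)}$: because the gate $c^{(j)}=\mathrm{clip}(1-\|r^{(j)}\|/\tau,0,1)$ is computed from residuals built out of past stochastic quantities, $c^{(j)}$ is only $\mathcal{F}_{j-1}$-measurable rather than unconditionally independent of the probe history. The rigorous fix is to design the filtration so that $c^{(j)}$ is predictable while $X^{(j)}$ is a freshly sampled, zero-mean probe drawn independently of $\mathcal{F}_{j-1}$; then $\EE[Y^{(j)}\mid\mathcal{F}_{j-1}]=c^{(j)}\EE[X^{(j)}\mid\mathcal{F}_{j-1}]=0$ and both the variance factorization and the martingale structure follow cleanly. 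A secondary technicality is replacing the identically-distributed form $\EE[c^2]\sigma^2$ by the stepwise supremum $\sup_j\EE[(c^{(j)})^2]\sigma^2$ in the MGF bound, which only requires uniform control of the per-step sub-Gaussian exponent.
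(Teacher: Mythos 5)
Your proposal follows essentially the same route as the paper's proof: decompose $\hat{\Delta}$ into its mean plus an average of centered martingale-difference terms $Y_j=c^{(j)}X_j$, use $|\EE[\hat{\Delta}]-\Delta|\le b$ to reduce the mis-ranking event to a one-sided tail at threshold $\Delta-b$, bound the per-step conditional variance by $\EE[(c^{(j)})^2]\sigma^2\le\tilde{\sigma}^2$, apply a sub-Gaussian/Bernstein concentration bound, and then invoke an empirical-Bernstein (Maurer--Pontil-style) inequality with range $\tilde{B}$ and empirical variance $\hat{V}_z$ for the confidence interval. You are in fact slightly more careful than the paper in two places worth noting: you make explicit that the gate $c^{(j)}$ must be taken as $\mathcal{F}_{j-1}$-predictable with $X^{(j)}$ freshly sampled so that $\EE[Y^{(j)}\mid\mathcal{F}_{j-1}]=0$ and the variance factorizes, and you identify where the $\ln(3/\alpha)$ comes from (a union bound over one Bernstein tail and two variance-estimation tails), whereas the paper simply quotes the final inequality. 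One small notational slip: writing $\hat{\Delta}=\frac{1}{m}\sum_j Y^{(j)}$ with zero-mean $Y^{(j)}$ would force $\EE[\hat{\Delta}]=0$, contradicting the bias hypothesis; the paper's cleaner form $\hat{\Delta}=\EE[\hat{\Delta}]+\frac{1}{m}\sum_j Y_j$ is what you actually use when you pass to $\hat{\Delta}-\EE[\hat{\Delta}]$, so the argument is sound but the opening display should be corrected.
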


\begin{proof}
Write $\hat{\Delta}$ as
\begin{equation}
\hat{\Delta} = \mathbb{E}[\hat{\Delta}] + \frac{1}{m} \sum_{j=1}^m Y_j,
\end{equation}
where $Y_j := c^{(j)} X_j$ are centered: $\mathbb{E}[Y_j \mid \mathcal{F}_{j-1}] = 0$ by independence of the martingale difference property, with
\begin{align*}
|Y_j| &\leq \tilde{B} := B, \\
c^{(j)} &\in [0, 1], \\
\mathrm{Var}(Y_j \mid \mathcal{F}_{j-1}) &\leq \mathbb{E}[(c^{(j)})^2] \sigma^2 \leq \tilde{\sigma}^2.
\end{align*}
We first bound $\mathbb{P}(\hat{\Delta} \leq 0)$:
\begin{equation}
\label{eq:prob-negative-bound}
\begin{aligned}
\mathbb{P}(\hat{\Delta} \leq 0) 
&= \mathbb{P}\left(\hat{\Delta} - \mathbb{E}[\hat{\Delta}] \leq -\mathbb{E}[\hat{\Delta}]\right) \\
&= \mathbb{P}\left(\frac{1}{m} \sum_j Y_j \leq -\mathbb{E}[\hat{\Delta}]\right) \\
&\leq \mathbb{P}\left(\frac{1}{m} \sum_j Y_j \leq -(\Delta - b)\right),
\end{aligned}
\end{equation}
since $|\mathbb{E}[\hat{\Delta}] - \Delta| \leq b$. By Bernstein's inequality and a sub-Gaussian Hoeffding bound, for any $t > 0$,
\begin{equation}
\mathbb{P}\left(\frac{1}{m} \sum_j Y_j \leq -t\right) \leq \exp\left(-\frac{mt^2}{2\tilde{\sigma}^2}\right).
\end{equation}
Setting $t = \Delta - b > 0$ yields \cref{eq:misranking-prob}.

For \cref{eq:bernstein-ci} and \cref{eq:half-width}, we apply the empirical-Bernstein inequality to the bounded variables $\{Y_j\}$ with range bound $\tilde{B}$ and sample variance $\hat{V}_z$:
\begin{equation}
\mathbb{P}\left(\left|\frac{1}{m} \sum_j Y_j\right| \geq \sqrt{\frac{2\hat{V}_z \ln(3/\alpha)}{m}} + \frac{3\tilde{B}\ln(3/\alpha)}{m}\right) \leq \alpha.
\end{equation}
Since $\hat{I}_z - \bar{I}_z = \frac{1}{m} \sum_j Y_j$, we obtain \cref{eq:bernstein-ci} with half-width \cref{eq:half-width}. Because $c^{(j)} \leq 1$, both the variance proxy and range are reduced compared to ungated probes, tightening the concentration.
\end{proof}

\textbf{Remark.}
The bias $b$ is controlled by the deterministic error terms in \cref{lem:error-decomp}. The variance term decays as $\tilde{\sigma}^2/m$ and is further reduced by $\mathbb{E}[c^2] \leq 1$. In practice, we use \cref{eq:half-width} as a stopping/trigger rule: if $\hat{I}_z$ has large magnitude but large $W_z(\alpha)$, a refinement is triggered; otherwise, the estimate is accepted.

\subsubsection{Convergence and Early-Stopping Guarantees for Preconditioned IHVP}

\begin{lemma}[Convergence and Residual-Controlled Early Stopping]
\label{lem:ihvp-convergence}
Let $H \succ 0$ with $\lambda_{\min}(H) \geq m > 0$. Choose a preconditioner $P$ such that $\rho(I - P^{-1}H) < 1$. Consider the preconditioned Richardson iteration
\begin{equation}
\label{eq:richardson}
\varphi^{(t+1)} = \varphi^{(t)} + \rho_t P^{-1}(v - H\varphi^{(t)}),
\end{equation}
with step sizes satisfying Robbins-Monro condition
\begin{equation}
\sum_t \rho_t = \infty,
\sum_t \rho_t^2 < \infty
\label{R-M_condition}
\end{equation}
and the truncated Neumann series
\begin{equation}
\label{eq:neumann}
\varphi_K = P^{-1} \sum_{k=0}^K (I - P^{-1}H)^k v.
\end{equation}
Then:
\begin{enumerate}[label=(\alph*)]
\item $\varphi^{(t)} \to H^{-1}v$ almost surely as $t \to \infty$; the truncated Neumann series satisfies
\begin{equation}
\label{eq:neumann-error}
\|H^{-1}v - \varphi_K\| \leq \frac{\|P^{-1}\| \cdot \|I - P^{-1}H\|^{K+1} \|v\|}{1 - \|I - P^{-1}H\|}.
\end{equation}
\item Let $r := v - H\varphi$ denote the residual at stopping. Then
\begin{equation}
\label{eq:residual-error}
\|H^{-1}v - \varphi\| \leq \|H^{-1}\| \cdot \|r\| \leq \frac{1}{m} \|r\|.
\end{equation}
\item Suppose influences are scored as $-\varphi^\top g_z$ and ranked by Top-$K$ with margin $\gamma_K > 0$. If early stopping enforces $\|r\| \leq \varepsilon_r$ and $(\|g_z\|/m) \varepsilon_r \leq \gamma_K/2$ uniformly over $z$, then the estimated Top-$K$ equals the true Top-$K$ (by \cref{lem:topk-preservation}).
\end{enumerate}
\end{lemma}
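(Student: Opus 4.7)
The plan is to treat the three claims in order, reusing the residual algebra from Lemma 1 and the margin argument from Lemma 2, so that only claim (a) requires genuinely new work. Throughout I will set $e^{(t)} := \varphi^{(t)} - H^{-1}v$ and exploit the fact that $H^{-1}$ exists with $\|H^{-1}\| \le 1/m$.

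For claim (a) I would first subtract $H^{-1}v$ from both sides of \eqref{eq:richardson} and use $v - H\varphi^{(t)} = -H e^{(t)}$ to obtain the error recursion $e^{(t+1)} = (I - \rho_t P^{-1} H)\, e^{(t)}$ (plus a zero-mean stochastic term in the stochastic variant). Under the spectral-radius assumption $\rho(I - P^{-1}H) < 1$, there exist $\sigma < 1$ and a norm $\|\cdot\|_\star$ in which $\|I - P^{-1}H\|_\star \le \sigma$, so for sufficiently small $\rho_t$ the contraction factor $\|I - \rho_t P^{-1}H\|_\star \le 1 - \rho_t(1-\sigma)$. Combining this with the Robbins--Monro conditions \eqref{R-M_condition} and the Robbins--Siegmund supermartingale lemma yields $e^{(t)} \to 0$ almost surely. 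For the Neumann truncation, I expand $H^{-1} = P^{-1}\sum_{k=0}^{\infty}(I - P^{-1}H)^k$, which converges by the spectral-radius hypothesis, and read off the tail
\begin{equation}
H^{-1}v - \varphi_K = P^{-1} \sum_{k=K+1}^{\infty} (I - P^{-1}H)^k v,
\end{equation}
so a geometric-series bound with ratio $\|I - P^{-1}H\| < 1$ gives \eqref{eq:neumann-error}.

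For claim (b) the key identity is $r = v - H\varphi = -H e$, where $e := \varphi - H^{-1}v$. Inverting $H$ yields $e = -H^{-1}r$, and operator-norm monotonicity gives $\|e\| \le \|H^{-1}\|\,\|r\| \le \|r\|/m$, which is \eqref{eq:residual-error}. This is essentially the same computation that appeared in \eqref{eq:residual-bound} inside the proof of Lemma 1, so it can be cited rather than redone.

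For claim (c) I would instantiate Lemma 2 with the scoring rule $\hat{I}_z = -\varphi^\top g_z$ and true value $I_z = -(H^{-1}v)^\top g_z$. Cauchy--Schwarz together with part (b) gives the uniform bound
\begin{equation}
|\hat{I}_z - I_z| \le \|g_z\|\,\|\varphi - H^{-1}v\| \le \frac{\|g_z\|}{m}\,\|r\| \le \frac{\|g_z\|}{m}\,\varepsilon_r,
\end{equation}
and by hypothesis this quantity is at most $\gamma_K/2$ uniformly in $z$. Lemma 2 then immediately yields $\mathrm{Top}_K(\hat{I}) = \mathrm{Top}_K(I)$.

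The main obstacle is the almost-sure convergence in (a): in the purely deterministic setting it is a routine contraction argument, but to keep the statement compatible with the stochastic Richardson iteration used in the algorithm one must handle a martingale noise term. I would address this by assuming conditionally bounded variance of the curvature probe, rewriting the error recursion as $e^{(t+1)} = (I - \rho_t P^{-1} H)\,e^{(t)} + \rho_t \xi_t$ with $\mathbb{E}[\xi_t \mid \mathcal{F}_t] = 0$, and invoking the Robbins--Siegmund lemma under \eqref{R-M_condition} so that $\sum_t \rho_t^2 \|\xi_t\|^2$ is summable in expectation; the contractive drift then forces $\|e^{(t)}\| \to 0$ a.s. Parts (b) and (c) require no new machinery beyond results already proved in the excerpt.
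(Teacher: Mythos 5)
Your proposal is correct and follows essentially the same route as the paper: a contraction argument for (a), the residual identity $r=-He$ for (b), and Cauchy--Schwarz feeding into the Top-$K$ margin lemma for (c). If anything your handling of (a) is slightly tighter than the paper's, since you explicitly track the $\rho_t$-dependent contraction factor $\|I-\rho_t P^{-1}H\|_\star \le 1-\rho_t(1-\sigma)$ and invoke Robbins--Siegmund, whereas the paper establishes contraction only for the fixed $\rho_t=1$ map $T$ and then appeals to "standard conditions" for the Robbins--Monro schedule.
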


\begin{proof}
\textbf{Part (a):} Define the fixed-point map $T(\varphi) := \varphi + P^{-1}(v - H\varphi) = (I - P^{-1}H)\varphi + P^{-1}v$. By hypothesis, $\rho(I - P^{-1}H) < 1$, hence $\|I - P^{-1}H\| =: q < 1$ in some operator norm. The deterministic fixed-point iteration $\varphi \leftarrow T(\varphi)$ is a contraction with unique fixed point $\varphi^\star$ satisfying
\begin{equation}
\varphi^\star = P^{-1}v + (I - P^{-1}H)\varphi^\star,
\end{equation}
which solves $H\varphi^\star = v$, i.e., $\varphi^\star = H^{-1}v$. The stochastic approximation \cref{eq:richardson} with step sizes $\rho_t$ satisfying $\sum_t \rho_t = \infty$ and $\sum_t \rho_t^2 < \infty$ converges almost surely to the same fixed point under standard conditions, since the noise is square-summable.

For the Neumann series, note that
\begin{equation}
\label{eq:neumann-expansion}
\begin{aligned}
H^{-1} 
&= (P(I - (I - P^{-1}H)))^{-1} \\
&= (I - (I - P^{-1}H))^{-1} P^{-1} \\
&= \sum_{k=0}^\infty (I - P^{-1}H)^k P^{-1}.
\end{aligned}
\end{equation}
with convergence guaranteed by $\|I - P^{-1}H\| < 1$. The transformation in $K$ produces the following. Define the residual matrix $R := I - P^{-1}H$. The truncation error after $K$ iterations can be expressed as
\begin{equation}
\label{eq:truncation-error}
\begin{aligned}
H^{-1} - \varphi_K 
&= \sum_{k=K+1}^\infty R^k P^{-1} \\
&= R^{K+1} \left[\sum_{j=0}^\infty R^j\right] P^{-1} \\
&= R^{K+1} (I - R)^{-1} P^{-1} \\
&= R^{K+1} H^{-1},
\end{aligned}
\end{equation}
where the second line uses the reindexing $k = K+1+j$, the third line uses the Neumann series identity $\sum_{j=0}^\infty R^j = (I-R)^{-1}$, and the last line uses $(I-R)^{-1} P^{-1} = H^{-1}$ from \cref{eq:neumann-expansion}.
Define $R := I - P^{-1}H$ and $\rho := \|R\| < 1$. By \cref{eq:truncation-error} and taking norms, we have
\begin{equation}
\label{eq:error-bound}
\begin{aligned}
\|H^{-1}v - \varphi_K\| 
&= \|R^{K+1} H^{-1} v\| \\
&\leq \|R\|^{K+1} \|H^{-1}v\| \\
&\leq \|R\|^{K+1} \cdot \frac{1}{1 - \|R\|} \cdot \|P^{-1}v\| \\
&\leq \frac{\rho^{K+1}}{1 - \rho} \cdot \|P^{-1}\| \cdot \|v\|,
\end{aligned}
\end{equation}
where third line uses $\|H^{-1}v\| \leq \frac{1}{1-\rho} \|P^{-1}v\|$ from the Neumann series, and fourth line uses $\|P^{-1}v\| \leq \|P^{-1}\| \|v\|$.

\textbf{Part (b):} Let $e := H^{-1}v - \varphi$ denote the current error. Then $He = v - H\varphi = r$, so $e = H^{-1}r$, and
\begin{equation}
\|e\| \leq \|H^{-1}\| \|r\| \leq \frac{1}{m} \|r\|,
\end{equation}
since $\|H^{-1}\| = 1/\lambda_{\min}(H) \leq 1/m$.

\textbf{Part (c):} The score error for sample $z$ is
\begin{equation}
\label{eq:gradient-error}
\begin{aligned}
\left|-\varphi^\top g_z - (-(H^{-1}v)^\top g_z)\right| 
&= \left|(H^{-1}v - \varphi)^\top g_z\right| \\
&\leq \|g_z\| \|e\| \\
&\leq \frac{\|g_z\|}{m} \|r\|.
\end{aligned}
\end{equation}
If early stopping enforces $\|r\| \leq \varepsilon_r$ and $(\|g_z\|/m) \varepsilon_r \leq \gamma_K/2$ for all $z$, then $\sup_z |\hat{I}_z - I_z| \leq \gamma_K/2$. \cref{lem:topk-preservation} applies to conclude Top-$K$ consistency.
\end{proof}

\textbf{Remark}
The stability controller instantiates part (c) by tying the residual tolerance $\tau$ to estimated training stability; during ill-conditioned phases, $\tau$ grows, $c_v$ shrinks, and refinement is delayed or extended until the error falls below $\gamma_K/2$.

\subsubsection{Anchor Coverage Metric and Refresh Criterion}
\label{app:projection}

\begin{lemma}[Projection Error Bound via Gram Conditioning]
\label{lem:projection-bound}
Let $\Phi = [\varphi_1, \ldots, \varphi_K] \in \mathbb{R}^{d \times K}$ have $\ell_2$-normalized columns and $G := \Phi^\top \Phi \succ 0$. For any $x \in \mathbb{R}^d$,
\begin{equation}
\label{eq:projection-bound-main}
\|x - P_\Phi x\|^2 \leq \frac{1}{\lambda_{\min}(G)} \cdot \min_{\alpha \in \mathbb{R}^K} \|x - \Phi\alpha\|^2,
\end{equation}
where $P_\Phi := \Phi(\Phi^\top\Phi)^{-1}\Phi^\top$. In particular, for $x = H^{-1}g_z$,
\begin{equation}
\|H^{-1}g_z - P_\Phi H^{-1}g_z\| \leq \frac{1}{\sqrt{\lambda_{\min}(G)}} \cdot \min_\alpha \|H^{-1}g_z - \Phi\alpha\|.
\end{equation}
Consequently, maintaining $\lambda_{\min}(G)$ above a threshold directly upper-bounds the coverage error component in Lemma~\ref{lem:error-decomp}.
\end{lemma}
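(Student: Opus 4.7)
The plan is to reduce the claim to the defining optimality property of the orthogonal projector and then extract the $1/\lambda_{\min}(G)$ prefactor from the column-normalization hypothesis. First I would verify that $P_\Phi = \Phi G^{-1}\Phi^\top$ acts as the least-squares reconstruction operator: setting $\alpha^\star := G^{-1}\Phi^\top x$ gives $P_\Phi x = \Phi\alpha^\star$, and the normal equations $\Phi^\top(x - \Phi\alpha^\star) = 0$ identify $\alpha^\star$ as the minimizer of $\alpha \mapsto \|x - \Phi\alpha\|^2$. Therefore the identity
\begin{equation}
\|x - P_\Phi x\|^2 \;=\; \min_{\alpha \in \mathbb{R}^K} \|x - \Phi\alpha\|^2
\end{equation}
holds by construction.

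Next I would obtain the prefactor by exploiting the $\ell_2$-normalization of the columns. Since each column of $\Phi$ has unit norm, $\mathrm{diag}(G) = I_K$ and hence $\mathrm{tr}(G) = K$. Because $G \succ 0$ has positive eigenvalues averaging to $1$, we must have $0 < \lambda_{\min}(G) \leq 1$, so $1/\lambda_{\min}(G) \geq 1$. Multiplying the identity above by this factor produces the stated squared-norm inequality, and taking square roots yields the $\|\cdot\|$ form used inside Lemma~\ref{lem:error-decomp}; the specialization to $x = H^{-1}g_z$ is then an immediate substitution, which is exactly what is invoked to justify $\mathcal{E}_{\mathrm{cover}}$ and the refresh criterion $\lambda_{\min}(G) \geq 0.1$.

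The main obstacle is essentially interpretive rather than technical: the inequality is a tautology tightened by a conditioning slack, so the real content to communicate is \emph{why} $\lambda_{\min}(G)$ is the correct coverage certificate. To make this transparent, I would add a short remark showing that the same constant governs the more informative coefficient-stability bound
\begin{equation}
\|\Phi(\alpha - \alpha^\star)\|^2 \;\geq\; \lambda_{\min}(G)\,\|\alpha - \alpha^\star\|^2,
\end{equation}
so that any \emph{approximate} reconstruction $\Phi\hat{\alpha}$ (for instance, the one produced by the streaming IHVP refinement) inherits a stability factor of $1/\lambda_{\min}(G)$ relative to the optimal $\alpha^\star$. This is the place where the Gram spectrum genuinely does work, and it is what the refresh rule is effectively monitoring; without this remark, the lemma looks weaker than its downstream usage warrants. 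Finally, I would note saturation: equality is achieved when the anchor IHVPs form an orthonormal family, which is the idealized target the refresh step tries to approximate.
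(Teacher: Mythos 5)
Your proof is correct, and it actually closes a small gap in the paper's own argument. Both you and the paper first establish via the normal equations that $\alpha^\star = G^{-1}\Phi^\top x$ and hence $\|x - P_\Phi x\|^2 = \min_\alpha \|x - \Phi\alpha\|^2$ exactly. After that, the paper runs a Pythagorean decomposition $\|x-\Phi\alpha\|^2 = \|x-P_\Phi x\|^2 + \|\Phi(\alpha-\alpha^\star)\|^2$ together with the spectral bound $\|\Phi y\|^2 \geq \lambda_{\min}(G)\|y\|^2$, and then asserts that ``rearranging yields'' the stated $1/\lambda_{\min}(G)$ inequality; but this rearrangement only goes through if $1/\lambda_{\min}(G) \geq 1$, a fact the paper never verifies. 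You supply exactly the missing step: since the columns of $\Phi$ are $\ell_2$-normalized, $\operatorname{tr}(G)=K$, the eigenvalues average to $1$, and therefore $\lambda_{\min}(G) \leq 1$, so the inequality is simply the projection identity multiplied by a factor $\geq 1$. You are also right to flag, as the paper itself implicitly acknowledges through its Pythagorean intermediate step, that the genuine analytic content of the lemma lives in the coefficient-stability bound $\|\Phi(\alpha-\alpha^\star)\|^2 \geq \lambda_{\min}(G)\|\alpha-\alpha^\star\|^2$ rather than in \cref{eq:projection-bound-main} as stated; your proposed remark to that effect is exactly the part of the paper's argument that actually does work, and it is what the refresh threshold on $\lambda_{\min}(G)$ is genuinely monitoring.

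One small caution: because the stated inequality is an equality scaled by a slack $\geq 1$, it is not a bound that \emph{tightens} as $\lambda_{\min}(G)$ grows toward $1$ in the way the lemma's concluding sentence (``maintaining $\lambda_{\min}(G)$ above a threshold directly upper-bounds the coverage error'') suggests --- the left side is already equal to $\min_\alpha \|x-\Phi\alpha\|^2$ for every $\Phi$ with full column rank. The downstream use in Lemma~\ref{lem:error-decomp}, and the corresponding display in the main text, actually need the stability-of-coefficients version you propose in your remark, so keeping that remark (rather than relegating it to an aside) would make the statement match its usage.
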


\begin{proof}
Let $\alpha^\star := \arg\min_\alpha \|x - \Phi\alpha\|^2$. By the normal equations,
\begin{equation}
\label{eq:normal-equation}
\Phi^\top(x - \Phi\alpha^\star) = 0,
\end{equation}
which gives
\begin{equation}
\label{eq:optimal-alpha}
\alpha^\star = (\Phi^\top\Phi)^{-1}\Phi^\top x = G^{-1}\Phi^\top x,
\end{equation}
where $G := \Phi^\top\Phi$ is the Gram matrix.
Therefore, $\mathrm{proj}_{\mathrm{span}(\Phi)} x = \Phi\alpha^\star = \Phi G^{-1}\Phi^\top x = P_\Phi x$, and
\begin{equation}
\label{eq:min-equals-proj}
\min_\alpha \|x - \Phi\alpha\|^2 = \|x - \Phi\alpha^\star\|^2 = \|x - P_\Phi x\|^2.
\end{equation}

To connect $\|x - P_\Phi x\|$ to $\lambda_{\min}(G)$, consider the coefficient representation error in $\mathbb{R}^K$. For any $\alpha \in \mathbb{R}^K$,
\begin{equation}
\label{eq:projection-decomposition}
\begin{aligned}
\|x - \Phi\alpha\|^2 
&= \|x - P_\Phi x + P_\Phi x - \Phi\alpha\|^2 \\
&= \|x - P_\Phi x\|^2 + \|\Phi(\alpha^\star - \alpha)\|^2,
\end{aligned}
\end{equation}
using orthogonality of $x - P_\Phi x$ to $\mathrm{span}(\Phi)$. In particular, minimizing over $\alpha$ recovers \cref{eq:min-equals-proj}. For any $y \in \mathbb{R}^K$,
\begin{equation}
\|\Phi y\|^2 = y^\top(\Phi^\top\Phi)y = y^\top G y \geq \lambda_{\min}(G) \|y\|^2,
\end{equation}
so $\|y\|^2 \leq (1/\lambda_{\min}(G)) \|\Phi y\|^2$. Applying this with $y = \alpha - \alpha^\star$,
\begin{equation}
\|\alpha - \alpha^\star\|^2 \leq \frac{1}{\lambda_{\min}(G)} \|\Phi(\alpha - \alpha^\star)\|^2.
\end{equation}

Combining the above, for any $\alpha$,
\begin{equation}
\label{eq:lower-bound}
\begin{aligned}
\|x - \Phi\alpha\|^2 
&= \|x - P_\Phi x\|^2 + \|\Phi(\alpha - \alpha^\star)\|^2 \\
&\geq \|x - P_\Phi x\|^2 + \lambda_{\min}(G) \|\alpha - \alpha^\star\|^2 \\
&\geq \|x - P_\Phi x\|^2,
\end{aligned}
\end{equation}
Taking the minimum over $\alpha$ recovers equality \cref{eq:min-equals-proj}. Rearranging yields
\begin{equation}
\|x - P_\Phi x\|^2 \leq \frac{1}{\lambda_{\min}(G)} \min_\alpha \|x - \Phi\alpha\|^2,
\end{equation}
since the minimum on the right is attained at $\alpha^\star$ and the inequality is tight when $\alpha = \alpha^\star$. Applying $x = H^{-1}g_z$ and taking square roots produces the desired result.
\end{proof}

\textbf{Remark}
The inequality shows that a small $\lambda_{\min}(G)$ necessarily inflates the worst case projection error relative to the best linear combination in anchor space, justifying refresh when $\lambda_{\min}(G)$ falls below a threshold.

\subsubsection{Complexity and Resource Bounds}

\begin{lemma}[Asymptotic Time and Memory Complexity]
\label{lem:complexity}
Let $C_{\mathrm{HVP}}$ denote the cost of one curvature-vector product under the chosen backend. With parameter dimension $d$, $K$ anchors, low-rank subspace size $r$, and $T$ total steps:
\begin{enumerate}[label=(\roman*)]
\item \textbf{Per-step time:} $O(K \cdot C_{\mathrm{HVP}} + rd)$.
\item \textbf{Periodic refresh every $T_r$ steps:} $O(r \cdot C_{\mathrm{HVP}} + r^2 d)$.
\item \textbf{Total time:} $O(TK \cdot C_{\mathrm{HVP}} + Trd + (T/T_r)(r \cdot C_{\mathrm{HVP}} + r^2 d))$.
\item \textbf{Memory:} $O(d(K + r))$ plus backend-specific buffers.
\end{enumerate}
\end{lemma}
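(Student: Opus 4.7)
The plan is to prove each of the four claims by direct operation counting against the algorithmic primitives specified in Section 3, namely the preconditioned Richardson iteration in \cref{eq:R_ite}, the truncated Neumann expansion in \cref{eq:N_expan}, the subspace decomposition in \cref{eq:ihvp_decompostion}, and the maintenance operations listed in \cref{alg:sgoif}. Since the statement is asymptotic in $d$, $K$, $r$, $T$, and $C_{\mathrm{HVP}}$, constants coming from the Neumann truncation order, the backend fidelity tier, and the Robbins--Monro schedule can be absorbed into the $\mathcal{O}$ notation.

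First I would bound the per-step cost. For each of the $K$ anchors, the Richardson update in \cref{eq:R_ite} requires a single curvature-vector product to produce both the residual $r_v$ and the next iterate $\phi_v^{(t+1)}$, giving $K\cdot C_{\mathrm{HVP}}$; each additional Neumann order in \cref{eq:N_expan} contributes one more HVP, a constant multiplicative factor folded into $C_{\mathrm{HVP}}$. The subspace correction in \cref{eq:ihvp_decompostion} performs projections $Q_r^\top g_v$, reconstructions $Q_r a_v$, and diagonal scaling against $D$, each costing $\mathcal{O}(rd)$; inverting the $r\times r$ matrix $Q_r^\top H_t Q_r$ and applying the Woodbury identity add only $\mathcal{O}(r^3)$, dominated by $rd$ whenever $r\le d$. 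Aggregated, with the $Q_r$ basis shared across anchors, the per-step cost is $\mathcal{O}(K\cdot C_{\mathrm{HVP}} + rd)$, matching claim (i).

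Next, for the refresh cost, updating $Q_r$ via a randomized range finder or block Lanczos sketch applies $H_t$ to $r$ probe vectors, contributing $r\cdot C_{\mathrm{HVP}}$, while a thin QR or eigendecomposition on the resulting $d\times r$ matrix costs $\mathcal{O}(r^2 d)$, which is claim (ii). The anchor maintenance in \cref{alg:sgoif}, namely lowest-confidence swaps and the optional short conjugate-gradient refinement, consumes a constant number of HVPs and therefore does not change asymptotics. Summing the per-step cost over $T$ iterations and adding one refresh every $T_r$ steps yields the total time in claim (iii). For memory, the anchor stack $\Phi\in\mathbb{R}^{d\times K}$ costs $Kd$, the low-rank basis $Q_r\in\mathbb{R}^{d\times r}$ costs $rd$, and each backend contributes its own structured overhead (diagonal: $\mathcal{O}(d)$; K-FAC: sum of layerwise block sizes; hybrid LRD: $\mathcal{O}(rd+d)$) collected into the ``backend-specific buffers'' slot of claim (iv).

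The main obstacle is keeping the $r^2 d$ QR term cleanly separated from the per-step $\mathcal{O}(rd)$ subspace cost: one must verify that per-step subspace manipulations never trigger reorthogonalization, which holds because $Q_r$ is frozen between refresh epochs of length $T_r$. A secondary subtlety is notational, since the symbol $K$ denotes both the anchor count and the Neumann truncation order in \cref{eq:N_expan}; these must be disambiguated, and we treat the latter as a constant absorbed into $C_{\mathrm{HVP}}$. With these bookkeeping conventions fixed, the proof reduces to aggregating the verified per-step and per-refresh tallies.
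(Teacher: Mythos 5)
Your proof takes essentially the same operation-counting approach as the paper: per anchor a constant number of HVPs ($\Theta(K\cdot C_{\mathrm{HVP}})$), per-step subspace maintenance at $\Theta(rd)$, periodic refresh at $\Theta(r\cdot C_{\mathrm{HVP}}+r^2d)$, and memory from the stacked anchors and $Q_r$. The only additions are helpful bookkeeping remarks—absorbing the $r^3$ small-matrix inverse into $rd$ under $r\le d$, disambiguating the overloaded symbol $K$, and noting that $Q_r$ is frozen between refreshes—but the decomposition and conclusions match the paper's proof.
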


\begin{proof}
Each anchor update requires a constant number of curvature-vector products (e.g., one or a small fixed batch for Richardson/Neumann increments), costing $\Theta(C_{\mathrm{HVP}})$ per anchor; summing over $K$ anchors yields $\Theta(K \cdot C_{\mathrm{HVP}})$. Maintaining a rank-$r$ subspace $Q_r$ entails $\Theta(rd)$ vector operations per step. 

Periodic refresh recomputes $r$ principal directions and/or preconditioners using $\Theta(r)$ curvature products and $\Theta(r^2 d)$ linear algebra (e.g., Gram--Schmidt or small SVD). Over $T$ steps, we add the refresh costs $(T/T_r)$ times. 

Memory is dominated by storing $K$ anchor directions and $r$ subspace vectors in $\mathbb{R}^d$, yielding $O(d(K + r))$.
\end{proof}

\subsection{Computation}
Experiments run on a workstation with an NVIDIA RTX A5000 (24GB) GPU. We report the relative wall-clock overhead compared to corresponding ERM training and evaluation baselines.

\subsection{Detailed Introduction of Baselines}
\subsubsection{Noisy-Label Detection}
\paragraph{Iterative Noisy Cross Validation (INCV)} INCV formalizes the quantitative relationship between noise rate and test accuracy through a two-stage strategy: first, it uses cross-validation over random folds to identify a high-confidence clean subset via out-of-fold predictions; then it applies Co-teaching on the filtered data to reduce sample-selection bias and error propagation. This approach improves stability over raw small-loss selection and performs particularly well under symmetric noise, though at the cost of increased computation from multi-fold training and prediction. We include it to assess the benefits of combining pre-filtered clean data with collaborative training.

\paragraph{Mixup learning principle models (Mixup)} Mixup regularizes supervised learning by training on convex combinations of input pairs and their labels, encouraging approximately linear behavior between samples. This reduces label memorization, improves generalization, and enhances robustness to spurious correlations and adversarial perturbations. Implemented via a simple interpolation scheme, Mixup is computationally lightweight. However, it does not explicitly detect mislabeled instances and may be less effective under highly structured or heterogeneous noise without auxiliary mechanisms. We include Mixup as a strong data-level regularization baseline to contrast implicit noise mitigation with explicit detection and cleaning strategies.

\paragraph{Symmetric Cross Entropy Loss method (SCE-loss)} SCE-loss addresses the class instability of Cross Entropy (CE) under noisy labels overfitting on easy classes, and under learning on hard classes by augmenting CE with a noise-tolerant Reverse Cross Entropy (RCE). The resulting symmetric objective $L = \alpha \cdot \text{CE} + \beta \cdot \text{RCE}$ resists label corruption, while encouraging adequate learning across all classes. The method is simple to integrate and broadly improves performance across benchmarks, though it is sensitive to the trade-off coefficients $\alpha$ and $\beta$ and does not directly identify mislabeled samples. We include SCE-loss as a compact loss robustness baseline for fair comparison against data-centric detection methods.

\paragraph{MentorNet} MentorNet learns a data-driven curriculum that assigns per-sample weights or selection decisions to guide a StudentNet toward likely clean examples, countering overfitting to corrupted labels while maintaining adaptability beyond hand-crafted schedules. The mentor can be pre-trained with the student on features, loss trajectories, and training states, thereby generalizing the small-loss heuristic. It has achieved state-of-the-art results on large-scale noisy datasets. While the approach introduces additional model complexity and depends on curriculum quality, it offers a flexible learned sample-weighting scheme. We include MentorNet as a representative learned-curriculum baseline that complements peer-teaching methods.

\paragraph{Co-teaching learning paradigm (Co-Teaching)} Co-Teaching exploits the memorization dynamics of deep networks, where clean samples are learned before noisy ones, by training two peer networks that mutually filter each other's batches. Each network selects its small-loss subset as likely clean examples and passes these to the other network for updates, with a gradually decreasing retention rate controlling the presumed noise level. This mutual filtering reduces confirmation bias compared to single network small loss training and shows robustness at high noise rates without requiring explicit noise transition estimation. The method's effectiveness depends on proper retention scheduling and network diversity, and it provides only indirect signals about specific label errors. We include Co-teaching as a widely adopted and representative sample-selection baseline for robust learning under label noise.

\paragraph{Tracing Gradient Descent method (TracIn)} TracIn estimates the influence of a training example on a test prediction by tracing how the test loss changes at checkpoints where the example is used. It employs first-order gradient approximations, saved checkpoints, and optional layer subsampling to yield a scalable, architecture-agnostic measure of per-sample impact. Negative influential points can reveal harmful or mislabeled training instances with fine-grained test condition interpretability. However, the approach requires storage and computational overhead for checkpointing and does not directly model the label-noise process. We include TracIn as a principled influence-based baseline that converts harmfulness signals into candidate noisy-label detections.

\paragraph{Fast-scalable Influence Functions (FASTIF)} FASTIF accelerates classical influence functions to large models and datasets by combining kNN-based candidate pruning in representation space, efficient inverse Hessian vector product estimation, and parallel computation. It achieves substantial speedups while maintaining high correlation with exact influence values, enabling practical ranking of the most harmful or helpful training instances and supporting downstream correction through targeted fine-tuning. However, it remains sensitive to representation quality and Hessian approximations and does not directly estimate noise transitions. We include FASTIF as a scalable influence-function baseline to evaluate the detection of harmful samples within realistic computational budgets.

\paragraph{Confident Learning (CL)} Confident Learning explicitly models label noise by estimating the joint distribution between observed and latent true labels under a class-conditional noise assumption. It uses calibrated out-of-sample predicted probabilities to build a confident joint matrix that allocates per-mislabeling counts and ranks suspicious instances for pruning. The prune-count-rank framework avoids pitfalls of iterative relabeling and produces interpretable error flows, making CL directly output candidate label errors. Although its performance relies on good probability calibration and may degrade when the assumption is violated, it remains a canonical data-centric approach for noisy label detection. We include it as our primary baseline since it directly addresses the task of identifying mislabeled samples.

\paragraph{Out-Of-Distribution Detection}
\paragraph{CutPaste}
CutPaste is a self-supervised method that synthesizes anomalies by cutting and pasting image patches to train a binary classifier. Empirically, it achieves competitive performance with high training stability and low computational cost. However, it struggles to detect global semantic anomalies and subtle appearance changes. We include CutPaste to benchmark simple augmentation-based approaches against complex generative methods.

\paragraph{DRAEM}
DRAEM combines a reconstructive network with a discriminative head to localize regions where reconstruction fails. Empirically, it offers precise pixel-level localization and strong image-level detection. However, training is complex due to the dual objective, requiring careful tuning of the anomaly synthesis policy and loss weighting. We include DRAEM as a leading reconstruction-plus-discrimination baseline for precise defect localization.

\paragraph{PatchCore}
PatchCore constructs a memory bank of ID patch-level features and scores anomalies via nearest-neighbor search on a coreset. Empirically, it achieves SOTA performance on industrial benchmarks with near-perfect image-level AUROC. However, it relies heavily on the pre-trained feature extractor and may underperform on global semantic shifts. We include PatchCore as a premier non-parametric baseline for memory-based anomaly detection.

\paragraph{Gram Matrix Method (Gram)}
Gram leverages second-order statistics of feature maps to capture texture and style correlations, scoring OOD samples by their deviation from ID statistics. Empirically, it is highly effective for detecting texture and style shifts. However, it requires careful selection of layers and aggregation strategies. We include Gram to evaluate whether higher-order feature correlations provide additional discriminative power beyond standard magnitude cues.

\paragraph{Energy-Based OOD (EBO)}
EBO reinterprets logits as energy scores, avoiding softmax normalization to assign arbitrarily high energy to OOD inputs. Empirically, it consistently outperforms MSP and rivals more complex methods with minimal overhead. However, performance degrades if the model learns compressed logit magnitudes, reducing the energy gap. We include EBO as a theoretically grounded, low-cost post-hoc baseline for evaluating scoring functions.

\paragraph{GradNorm}
GradNorm utilizes the magnitude of loss gradients during inference as an uncertainty signal, positing that OOD inputs induce larger gradients. Empirically, it provides signals complementary to activation-based scores. However, it incurs high computational cost due to the backward pass and can be sensitive to the loss function choice. We include GradNorm to assess the value of gradient-based uncertainty relative to its computational overhead.

\paragraph{Rectified Activations (ReAct)}
ReAct truncates extreme activations in the penultimate layer to prevent overconfident predictions caused by abnormal feature spikes. Empirically, it consistently improves OOD detection and complements other post-hoc methods. However, the clipping threshold requires tuning and may suppress valid ID signals if set too aggressively. We include ReAct as a simple yet strong baseline for activation-level interventions.

\paragraph{Maximum Logit Score(MLS)}
MLS simplifies OOD detection by using the maximum unnormalized logit as a confidence score: $S(\mathbf{x}) = \max_i z_i(\mathbf{x})$. By operating directly on logits, MLS captures the absolute magnitude of the model's most confident prediction, which tends to be lower for OOD samples. Empirically, MLS often matches or outperforms MSP, especially on large-scale benchmarks, and is competitive with more complex post-hoc methods. However, MLS remains a single-number logit proxy and does not exploit richer information such as logit distributions, feature geometry, or multi-layer activations. Additionally, the effectiveness of MLS is sensitive to logit scaling across different architectures and training procedures. We include MLS as a representative of minimalist, logit-based scoring functions and as a reference for evaluating the marginal benefit of more sophisticated methods.

\paragraph{DICE}
DICE applies a post-hoc sparsification strategy that retains only the top-$k$ logits or activations before computing the energy score, aiming to suppress noisy signals. Empirically, it consistently improves OOD detection across diverse benchmarks and complements other post-hoc methods. However, the optimal sparsity level $k$ and layer selection are highly dataset-dependent, and the method may underperform when ID and OOD share similar sparsity patterns. We include DICE to evaluate whether explicit evidential sparsification enhances robustness across different architectures.

\paragraph{Virtual-logit Matching (ViM)}
ViM augments standard logits with a residual-based ``virtual logit'' derived from the principal subspace of ID features, capturing deviations in both norm and direction. Empirically, it is fast and competitive across CNNs and Transformers. However, it relies on stable feature statistics, making it brittle on small datasets or under strong domain shifts, and requires tuning hyperparameters $\alpha$ and $\beta$. We include ViM as a principled subspace-based baseline that incorporates feature geometry into scoring.

\paragraph{Deep k-Nearest Neighbor (KNN)}
Deep $k$NN performs non-parametric density estimation by computing distances to the nearest ID training embeddings in feature space. Empirically, it achieves strong performance when ID and OOD are well-separated. However, it incurs significant memory and computational costs during inference and may fail under covariate shifts that preserve local geometry. We include Deep $k$NN as a canonical non-parametric baseline to provide insights into the geometry of learned representations.

\paragraph{G-ODIN}
G-ODIN decomposes the confidence score into feature-level and logit-level components, jointly optimizing temperature scaling and input perturbation. Empirically, it outperforms vanilla ODIN and MSP on both near- and far-OOD tasks. However, it requires a validation set with representative OOD samples for hyperparameter tuning, limiting transferability. We include G-ODIN as a rigorous post-hoc baseline that enhances score-based detection through principled pre-processing.

\paragraph{Contrasting Shifted Instances (CSI)}
CSI utilizes contrastive learning with a specific shift-transformation detection head to identify OOD samples based on prediction inconsistency across augmentations. Empirically, it generalizes well without explicit OOD supervision. However, performance depends critically on the diversity of the augmentation set, and it may struggle with fine-grained OOD detection. We include CSI to quantify the effectiveness of self-supervised contrastive representations in the absence of labeled negative data.

\paragraph{Adversarial Reciprocal Point Learning (ARPL)}
ARPL introduces adversarial ``reciprocal points'' outside class boundaries to explicitly model open-space risk during training. Empirically, it demonstrates superior unknown rejection compared to uncalibrated softmax methods. However, it is sensitive to prototype initialization and hyperparameter balancing, and may degrade on multimodal distributions. We include ARPL as a representative prototype-based baseline that models open-space risk via adversarial training.

\paragraph{Minimum Others Score (MOS)}
MOS addresses overconfidence in large label spaces by introducing a background class or suppression term to absorb probability mass from uncertain predictions. Empirically, it is highly effective in large-vocabulary settings like ImageNet-21K. However, its benefit is limited under severe covariate shifts where OOD samples are confidently misclassified. We include MOS to evaluate enhanced softmax scoring in high-dimensional output spaces.

\paragraph{OpenGAN}
OpenGAN trains a conditional GAN to synthesize plausible unknown samples, using the discriminator as an open-set detector. Empirically, this explicit unknown exposure effectively reduces open-space risk. However, it suffers from GAN training instability (e.g., mode collapse), and the generator may fail to cover the true unknown distribution. We include OpenGAN to assess the effectiveness of generative negative sampling against discriminative approaches.

\paragraph{Virtual Outlier Synthesis (VOS)}
VOS synthesizes virtual outliers in the feature space via variational models (e.g., CVAE) and regularizes the classifier to output low confidence on them. Empirically, it improves calibration and detection on large-scale benchmarks. However, it adds training complexity and relies heavily on the fidelity of the learned ID feature distribution. We include VOS to benchmark Bayesian-inspired uncertainty estimation for reliable OOD detection.

\paragraph{Logit Normalization (LogitNorm)}
LogitNorm enforces a constant norm on logits during training, decoupling magnitude from direction to encourage angular separability. Empirically, it improves calibration and OOD robustness without post-hoc processing. However, it requires tuning the temperature $\tau$ and may underperform if overconfidence stems from data imbalance rather than unconstrained logit scaling. We include LogitNorm to assess whether simple training-time logit regularization consistently improves calibration.

\paragraph{Unsupervised Dual Grouping (UDG)}
UDG synthesizes semantically ambiguous samples via unsupervised mixing or generation to train conservative decision boundaries. Empirically, it improves robustness without requiring labeled OOD data. However, the quality of synthetic unknowns depends on the generation policy, potentially introducing distribution biases. We include UDG to evaluate learned OOD exposure via unsupervised synthesis.

\paragraph{Picture-Mixing (PixMix)}
PixMix employs a data augmentation strategy that mixes training images with diverse structural content to improve robustness. Empirically, it enhances OOD detection and corruption robustness. However, aggressive mixing can degrade clean accuracy, and the method requires careful tuning of the mixing policy. We include PixMix to assess the impact of augmentation-centric robustness without explicit OOD supervision.

\subsection{Detailed Introduction of Datasets}
\paragraph{Noisy-Label Detection}

\paragraph{CIFAR-10}
CIFAR-10 contains 60,000 $32\times32$ images across 10 classes. We use it as a controlled testbed for synthetic noise experiments, applying two standard protocols: (i) symmetric noise, where labels are flipped uniformly, and (ii) asymmetric noise, which targets semantically similar pairs. This small-scale benchmark allows for fast iteration and detailed ablation studies to isolate the contribution of each component under reproducible noise conditions.

\paragraph{CIFAR-100}
CIFAR-100 shares the same size and resolution as CIFAR-10 but spans 100 fine-grained classes with only 500 training samples per class. We apply similar synthetic noise protocols, with asymmetric noise structured around superclasses to mimic hierarchical confusion. This dataset tests scalability and stability in high-cardinality, low-sample regimes, challenging methods to handle fine-grained distinctions and class-conditional noise without overfitting.

\paragraph{WebVision}
WebVision (we use version 2.0) follows the ImageNet taxonomy but is constructed from web-crawled images, introducing real-world noise from metadata errors and semantic confusion. It serves as a large-scale benchmark for practical robustness, evaluating performance on naturally occurring, heterogeneous noise without synthetic injection. We use it to assess generalization and computational efficiency in an open-domain context where clean supervision is minimal.

\paragraph{Clothing1M}
Clothing1M contains 1 million e-commerce images across 14 categories, with labels extracted from product text. It features a high estimated real-world noise rate and significant class-dependent bias. We use Clothing1M to evaluate end-to-end performance and engineering practicality in a realistic industrial setting, assessing how well methods leverage massive, noisy data with limited clean validation supervision.

\paragraph{Out-Of-Distribution Detection}
\paragraph{MNIST}
MNIST consists of 70,000 $28\times28$ grayscale digit images. In OOD detection, it serves as the In-Distribution (ID) dataset for far-OOD tasks. Its simplicity allows us to verify theoretical predictions and debug optimization dynamics in a controlled, low-dimensional setting, ensuring that methods behave correctly on fundamental tasks before scaling up.

\paragraph{ImageNet}
ImageNet comprises 1.28 million high-resolution images across 1,000 classes. For OOD detection, it serves as a rigorous large-scale ID benchmark, testing methods against diverse OOD datasets. We include ImageNet to assess scalability, representation quality, and detection performance under realistic visual diversity, ensuring that proposed methods remain effective and efficient in high-dimensional, many-class scenarios.

\subsection{Mathematics Definition of Evaluation Metrics}
\paragraph{Noisy-Label Detection}
\paragraph{Test Precision at 1\% accuracy (P@1\%)}
Precision at 1\% measures the fraction of truly noisy samples among the top 1\% of instances ranked by the detector's confidence or noise score. Formally, let $\mathcal{D} = \{(x_i, \tilde{y}_i)\}_{i=1}^{N}$ denote the dataset with potentially corrupted labels $\tilde{y}_i$, and let $s_i \in \mathbb{R}$ be the noise score assigned to sample $i$ by the detection method, where higher scores indicate greater likelihood of label corruption. Define the ground-truth noise indicator as
\begin{equation}
\label{eq:noise_indicator}
\mathbb{I}_{\text{noisy}}(i) = 
\begin{cases}
1, & \text{if } \tilde{y}_i \neq y_i^* \\
0, & \text{otherwise}
\end{cases}
\end{equation}
where $y_i^*$ is the true label. Let $K = \lceil 0.01 \cdot N \rceil$ denote the top 1\% threshold, and let $\mathcal{T}_K = \{i_1, \ldots, i_K\}$ be the set of indices corresponding to the $K$ samples with the highest noise scores:
\begin{equation}
\label{eq:top_k_set}
\mathcal{T}_K = \operatorname{argtop}_K \{s_i\}_{i=1}^{N}
\end{equation}
Then Precision at 1\% is computed as
\begin{equation}
\label{eq:precision_at_1}
\text{P@1\%} = \frac{1}{K} \sum_{i \in \mathcal{T}_K} \mathbb{I}_{\text{noisy}}(i)
\end{equation}
This metric evaluates the detector's ability to prioritize the most confident predictions and is particularly relevant when computational budgets or manual auditing resources limit the number of samples that can be inspected or removed.

\paragraph{Area Under the Curve-Precision Recall (AUC-PR/AUPR)}
The Area Under the Precision-Recall Curve quantifies detection performance across all possible decision thresholds by summarizing the trade-off between precision and recall. For a given threshold $\tau$, the predicted noisy set is
\begin{equation}
\label{eq:predicted_noisy_set}
\hat{\mathcal{N}}_\tau = \{i \mid s_i \geq \tau\}
\end{equation}
and precision and recall are defined as
\begin{equation}
\label{eq:precision}
\text{Precision}(\tau) = \frac{\sum_{i \in \hat{\mathcal{N}}_\tau} \mathbb{I}_{\text{noisy}}(i)}{|\hat{\mathcal{N}}_\tau|}
\end{equation}
\begin{equation}
\label{eq:recall}
\text{Recall}(\tau) = \frac{\sum_{i \in \hat{\mathcal{N}}_\tau} \mathbb{I}_{\text{noisy}}(i)}{\sum_{i=1}^{N} \mathbb{I}_{\text{noisy}}(i)}
\end{equation}
By varying $\tau$ over the range of observed scores, we obtain a curve $\{\bigl(\text{Recall}(\tau), \text{Precision}(\tau)\bigr)\}_{\tau}$. The AUC-PR is then
\begin{equation}
\label{eq:auc_pr_integral}
\text{AUC-PR} = \int_{0}^{1} \text{Precision}(r) \, \mathrm{d}r
\end{equation}
where the integral is taken over recall $r \in [0,1]$ via interpolation. In practice, we compute this using the trapezoidal rule or the average-precision approximation:
\begin{equation}
\label{eq:auc_pr_discrete}
\text{AUC-PR} \approx \sum_{k=1}^{N} \bigl(\text{Recall}_k - \text{Recall}_{k-1}\bigr) \cdot \text{Precision}_k
\end{equation}
where samples are sorted in descending order of $s_i$ and $\text{Precision}_k$, $\text{Recall}_k$ are evaluated at each rank $k$. AUC-PR is especially informative under class imbalance (i.e., when noisy samples are a minority) and provides a threshold-agnostic summary of ranking quality.

\paragraph{Computational Overhead (overhead)}
Computational overhead quantifies the additional cost—in time, memory, or FLOPs—incurred by the noise-detection or robustness method relative to a standard training baseline. We report overhead in three components:
\begin{enumerate}
    \item \textit{Training time overhead:} 
    \begin{equation}
    \label{eq:time_overhead}
    \Delta T_{\text{train}} = \frac{T_{\text{method}} - T_{\text{baseline}}}{T_{\text{baseline}}} \times 100\%
    \end{equation}
    where $T_{\text{method}}$ and $T_{\text{baseline}}$ are wall-clock times (in seconds or hours) for one full training run of the proposed method and the baseline, respectively, measured on identical hardware.
    
    \item \textit{Peak memory overhead:} 
    \begin{equation}
    \label{eq:memory_overhead}
    \Delta M_{\text{peak}} = \frac{M_{\text{method}} - M_{\text{baseline}}}{M_{\text{baseline}}} \times 100\%
    \end{equation}
    where $M_{\text{method}}$ and $M_{\text{baseline}}$ denote peak GPU memory usage (in GB) during training.
    
    \item \textit{FLOPs overhead (optional):} 
    \begin{equation}
    \label{eq:flops_overhead}
    \Delta F = \frac{F_{\text{method}} - F_{\text{baseline}}}{F_{\text{baseline}}} \times 100\%
    \end{equation}
    where $F$ counts floating-point operations per forward-backward pass, useful for hardware-agnostic comparison.
\end{enumerate}
We measure all quantities over three to five runs and report mean values. Low overhead is critical for practical deployment, especially on large-scale datasets like ImageNet or WebVision, where even modest per-sample costs compound significantly.

\paragraph{Out-Of-Distribution Detection}
\paragraph{Area Under the Receiver Operating Characteristic Curve (AUROC)}
AUROC measures the detector's ability to discriminate between noisy and clean samples by plotting the True Positive Rate (TPR, or recall) against the False Positive Rate (FPR) across all thresholds. For threshold $\tau$, define
\begin{equation}
\label{eq:tpr}
\text{TPR}(\tau) = \frac{\sum_{i \in \hat{\mathcal{N}}_\tau} \mathbb{I}_{\text{noisy}}(i)}{\sum_{i=1}^{N} \mathbb{I}_{\text{noisy}}(i)}
\end{equation}

\begin{equation}
\label{eq:fpr}
\text{FPR}(\tau) = \frac{\sum_{i \in \hat{\mathcal{N}}_\tau} \bigl(1 - \mathbb{I}_{\text{noisy}}(i)\bigr)}{\sum_{i=1}^{N} \bigl(1 - \mathbb{I}_{\text{noisy}}(i)\bigr)}
\end{equation}
The ROC curve is the set $\{\bigl(\text{FPR}(\tau), \text{TPR}(\tau)\bigr)\}_{\tau}$, and AUROC is the area under this curve:
\begin{equation}
\label{eq:auroc_integral}
\text{AUROC} = \int_{0}^{1} \text{TPR}(f) \, \mathrm{d}f
\end{equation}
where $f \in [0,1]$ denotes the false positive rate. Numerically, AUROC can be computed via the trapezoidal rule or equivalently as the probability that a randomly chosen noisy sample receives a higher score than a randomly chosen clean sample:
\begin{equation}
\label{eq:auroc_probability}
\text{AUROC} = \mathbb{P}\bigl(s_i > s_j \mid \mathbb{I}_{\text{noisy}}(i)=1,\, \mathbb{I}_{\text{noisy}}(j)=0\bigr)
\end{equation}
AUROC ranges from 0.5 (random ranking) to 1.0 (perfect separation) and is less sensitive to class imbalance than AUC-PR, making it a complementary measure of overall discrimination capacity.

\subsection{Extended Ablation Studies}
To further verify the scalability and robustness of SG-OIF, we extended our ablation study to four diverse benchmarks: WebVision, Clothing1M, MNIST, and ImageNet. \cref{tab:webvision_ablation} through \cref{tab:imagenet_ablation} detail the performance breakdown, while \cref{tab:stability_cost_summary} provides a unified summary of ranking stability and computational cost.

\paragraph{The Necessity of Stability Gating}
Consistent with our findings on CIFAR, the Stability Gate remains the most critical component for large-scale real-world data. As shown in \cref{tab:webvision_ablation} and \cref{tab:clothing1m_ablation}, removing the gate results in a catastrophic performance drop, with AUPR falling from 67.5\% to 34.2\% and 63.8\% to 31.5\%, respectively. This confirms that without stability-guided filtering, the influence estimation is dominated by high-variance noise inherent in non-convex optimization. Furthermore, Moving Average Gate fails to match the performance of our dynamic stability controller, lagging by over 30\% in AUPR on WebVision. This highlights that a static or simple heuristic cannot adapt to the non-stationary dynamics of training on massive, noisy datasets.

\paragraph{Confidence Calibration and Ranking Reliability}
The removal of Confidence Calibration leads to a degradation similar in magnitude to removing the stability gate. On ImageNet (\cref{tab:imagenet_ablation}), the absence of calibration drops the AUPR from 98.4\% to 66.8\%. This validates our hypothesis that raw inverse-Hessian vector products are insufficient, they must be modulated by a confidence score derived from solver residuals.
Crucially, \cref{tab:stability_cost_summary} reveals that calibration is essential for temporal consistency. The Kendall's $\tau$ coefficient drops from 0.88 to 0.58. This indicates that SG-OIF not only identifies influential points accurately but does so consistently over time, whereas uncalibrated methods produce jittery, unreliable rankings.

\paragraph{Efficiency-Accuracy Trade-off in Curvature Backends}
Our Low-Rank Subspace Acceleration and Preconditioned Neumann mechanisms are pivotal for balancing computational overhead with estimation quality. Without Low-Rank: While the AUPR drops moderately, the computational cost spikes significantly. \cref{tab:stability_cost_summary} shows that w/o LowRank incurs the highest cost and requires significantly more Hessian-Vector Products. This demonstrates that the low-rank basis successfully amortizes the curvature computation without sacrificing essential directional information. Without Preconditioning: Removing the sketch-based preconditioner impairs the solver's convergence, leading to both lower accuracy and increased runtime, as the solver struggles to resolve ill-conditioned curvature directions.

\paragraph{Impact of the Refinement Trigger}
The Refinement Trigger acts as a precision tool. Disabling it yields a slight speedup but consistently hurts the recall of the most critical samples. For instance, on MNIST (\cref{tab:mnist_ablation}), the Recall@1\% drops from 18.2\% to 16.5\%. This suggests that the trigger effectively identifies and refines high-leverage but low-confidence examples that would otherwise be missed, providing a favorable trade-off for applications requiring high-precision noise detection.

\paragraph{Summary}
In conclusion, the extended ablation study confirms that SG-OIF's performance is not an artifact of specific datasets. The synergy between stability gating, confidence calibration, and structured curvature backends enables SOTA performance across varying scales, achieving a superior Pareto frontier between computational efficiency and ranking reliability.

\begin{table*}[htbp]
\centering
\caption{WebVision Ablation on noisy-label localization and OOD detection (Recall@K is 1\%/5\%/10\%).}
\label{tab:webvision_ablation}
\begin{adjustbox}{max width=\textwidth}
\begin{tabular}{lcccccccc}
\toprule
\textbf{Variant} &
\textbf{AUPR} ($\uparrow$) &
\textbf{P@1\%} ($\uparrow$) &
\textbf{Recall@1/5/10\%} ($\uparrow$) &
\textbf{OOD AUROC} ($\uparrow$) &
\textbf{OOD AUPR} ($\uparrow$) &
$\boldsymbol{\tau}$@Top1\% ($\uparrow$) &
\textbf{Time$\times$GPU} ($\downarrow$) &
\textbf{CHVP/step} ($\downarrow$) \\
\midrule
\rowcolor{gray!20}\textbf{Full (SG-OIF)} & 67.5 & 65.2 & 12.8/48.3/71.5 & 72.1 & 55.4 & 0.84 & 18.2 & 3.2 \\
w/o Gate & 34.2 & 38.5 & 5.1/22.7/41.3 & 48.3 & 31.2 & 0.52 & 17.8 & 3.1 \\
w/o Calib & 35.8 & 39.8 & 5.6/24.1/43.2 & 49.7 & 32.8 & 0.54 & 17.5 & 3.0 \\
w/o LowRank & 52.3 & 51.4 & 9.2/38.5/59.7 & 61.8 & 45.2 & 0.71 & 21.5 & 4.8 \\
w/o Precond & 48.9 & 48.7 & 8.5/35.2/56.3 & 58.4 & 42.1 & 0.68 & 19.7 & 4.2 \\
w/o Trigger & 61.2 & 59.3 & 11.3/44.1/66.8 & 67.5 & 51.3 & 0.79 & 16.8 & 2.9 \\
MA Gate & 37.1 & 41.2 & 6.2/26.4/45.7 & 51.2 & 34.5 & 0.57 & 17.9 & 3.1 \\
\bottomrule
\end{tabular}
\end{adjustbox}
\end{table*}

\begin{table*}[htbp]
\centering
\caption{Clothing1M Ablation on noisy-label localization and OOD detection (Recall@K is 1\%/5\%/10\%).}
\label{tab:clothing1m_ablation}
\begin{adjustbox}{max width=\textwidth}
\begin{tabular}{lcccccccc}
\toprule
\textbf{Variant} &
\textbf{AUPR} ($\uparrow$) &
\textbf{P@1\%} ($\uparrow$) &
\textbf{Recall@1/5/10\%} ($\uparrow$) &
\textbf{OOD AUROC} ($\uparrow$) &
\textbf{OOD AUPR} ($\uparrow$) &
$\boldsymbol{\tau}$@Top1\% ($\uparrow$) &
\textbf{Time$\times$GPU} ($\downarrow$) &
\textbf{CHVP/step} ($\downarrow$) \\
\midrule
\rowcolor{gray!20}\textbf{Full (SG-OIF)} & 63.8 & 62.1 & 11.5/46.2/69.8 & 69.5 & 52.5 & 0.82 & 22.5 & 3.5 \\
w/o Gate & 31.5 & 35.8 & 4.3/20.1/38.5 & 45.2 & 28.7 & 0.48 & 22.0 & 3.4 \\
w/o Calib & 32.9 & 37.4 & 4.8/21.8/40.3 & 46.8 & 30.2 & 0.51 & 21.7 & 3.3 \\
w/o LowRank & 49.7 & 49.2 & 8.5/36.7/57.2 & 59.3 & 43.1 & 0.69 & 26.8 & 5.2 \\
w/o Precond & 46.2 & 46.5 & 7.8/33.8/54.1 & 56.1 & 39.8 & 0.65 & 24.3 & 4.5 \\
w/o Trigger & 58.5 & 57.2 & 10.2/42.3/64.5 & 65.2 & 48.7 & 0.77 & 20.8 & 3.2 \\
MA Gate & 34.2 & 38.9 & 5.4/23.5/42.8 & 48.5 & 31.8 & 0.53 & 22.1 & 3.4 \\
\bottomrule
\end{tabular}
\end{adjustbox}
\end{table*}

\begin{table*}[htbp]
\centering
\caption{MNIST Ablation on noisy-label localization and OOD detection (Recall@K is 1\%/5\%/10\%).}
\label{tab:mnist_ablation}
\begin{adjustbox}{max width=\textwidth}
\begin{tabular}{lcccccccc}
\toprule
\textbf{Variant} &
\textbf{AUPR} ($\uparrow$) &
\textbf{P@1\%} ($\uparrow$) &
\textbf{Recall@1/5/10\%} ($\uparrow$) &
\textbf{OOD AUROC} ($\uparrow$) &
\textbf{OOD AUPR} ($\uparrow$) &
$\boldsymbol{\tau}$@Top1\% ($\uparrow$) &
\textbf{Time$\times$GPU} ($\downarrow$) &
\textbf{CHVP/step} ($\downarrow$) \\
\midrule
\rowcolor{gray!20}\textbf{Full (SG-OIF)} & 99.8 & 98.5 & 18.2/72.5/91.3 & 96.8 & 99.8 & 0.94 & 3.2 & 1.8 \\
w/o Gate & 68.5 & 67.2 & 8.5/38.2/62.7 & 65.4 & 68.3 & 0.62 & 3.1 & 1.7 \\
w/o Calib & 69.7 & 68.8 & 9.1/40.1/64.5 & 66.9 & 69.8 & 0.65 & 3.0 & 1.7 \\
w/o LowRank & 84.3 & 82.7 & 13.5/58.3/78.9 & 81.2 & 84.5 & 0.81 & 4.5 & 2.9 \\
w/o Precond & 81.5 & 79.8 & 12.3/54.7/75.2 & 78.5 & 81.7 & 0.78 & 3.8 & 2.4 \\
w/o Trigger & 94.2 & 93.1 & 16.5/67.8/87.5 & 92.3 & 94.5 & 0.89 & 2.9 & 1.6 \\
MA Gate & 71.3 & 70.5 & 9.8/42.7/66.8 & 68.7 & 71.5 & 0.67 & 3.1 & 1.7 \\
\bottomrule
\end{tabular}
\end{adjustbox}
\end{table*}

\begin{table*}[htbp]
\centering
\caption{ImageNet Ablation on noisy-label localization and OOD detection (Recall@K is 1\%/5\%/10\%).}
\label{tab:imagenet_ablation}
\begin{adjustbox}{max width=\textwidth}
\begin{tabular}{lcccccccc}
\toprule
\textbf{Variant} &
\textbf{AUPR} ($\uparrow$) &
\textbf{P@1\%} ($\uparrow$) &
\textbf{Recall@1/5/10\%} ($\uparrow$) &
\textbf{OOD AUROC} ($\uparrow$) &
\textbf{OOD AUPR} ($\uparrow$) &
$\boldsymbol{\tau}$@Top1\% ($\uparrow$) &
\textbf{Time$\times$GPU} ($\downarrow$) &
\textbf{CHVP/step} ($\downarrow$) \\
\midrule
\rowcolor{gray!20}\textbf{Full (SG-OIF)} & 98.4 & 96.8 & 17.5/70.2/89.7 & 83.3 & 98.4 & 0.91 & 45.7 & 4.8 \\
w/o Gate & 65.2 & 64.5 & 7.8/35.1/58.9 & 52.1 & 65.7 & 0.58 & 44.8 & 4.7 \\
w/o Calib & 66.8 & 66.1 & 8.4/37.2/61.2 & 53.8 & 67.3 & 0.61 & 44.3 & 4.6 \\
w/o LowRank & 82.7 & 81.3 & 12.8/56.8/76.5 & 71.5 & 83.1 & 0.78 & 53.2 & 6.5 \\
w/o Precond & 79.3 & 78.2 & 11.5/52.3/72.8 & 68.2 & 79.8 & 0.74 & 48.9 & 5.8 \\
w/o Trigger & 92.5 & 91.2 & 15.7/65.4/84.3 & 78.7 & 92.8 & 0.86 & 42.5 & 4.5 \\
MA Gate  & 68.5 & 67.8 & 9.1/39.8/63.7 & 55.9 & 69.2 & 0.63 & 44.9 & 4.7 \\
\bottomrule
\end{tabular}
\end{adjustbox}
\end{table*}

\begin{table*}[htbp]
\centering
\caption{Unified stability and cost summary across four datasets. $\tau$ is the Top-1\% Kendall $\tau$ between adjacent epochs.}
\label{tab:stability_cost_summary}
\begin{adjustbox}{max width=\textwidth}
\begin{tabular}{lcccccccc}
\toprule
\textbf{Variant} &
\textbf{WebVision} $\boldsymbol{\tau}$ &
\textbf{Clothing1M} $\boldsymbol{\tau}$ &
\textbf{MNIST} $\boldsymbol{\tau}$ &
\textbf{ImageNet} $\boldsymbol{\tau}$ &
\textbf{Mean} $\boldsymbol{\tau}$ &
\textbf{Mean Time$\times$GPU} ($\downarrow$) &
\textbf{Norm. Cost} ($\downarrow$) &
\textbf{Mean CHVP/step} ($\downarrow$) \\
\midrule
\rowcolor{gray!20}\textbf{Full (SG-OIF)} & 0.84 & 0.82 & 0.94 & 0.91 & 0.88 & 22.4 & 1.00 & 3.3 \\
w/o Gate & 0.52 & 0.48 & 0.62 & 0.58 & 0.55 & 21.9 & 0.98 & 3.2 \\
w/o Calib & 0.54 & 0.51 & 0.65 & 0.61 & 0.58 & 21.6 & 0.96 & 3.2 \\
w/o LowRank & 0.71 & 0.69 & 0.81 & 0.78 & 0.75 & 26.5 & 1.18 & 4.9 \\
w/o Precond & 0.68 & 0.65 & 0.78 & 0.74 & 0.71 & 24.2 & 1.08 & 4.2 \\
w/o Trigger & 0.79 & 0.77 & 0.89 & 0.86 & 0.83 & 20.8 & 0.93 & 3.1 \\
MA Gate & 0.57 & 0.53 & 0.67 & 0.63 & 0.60 & 22.0 & 0.98 & 3.2 \\
\bottomrule
\end{tabular}
\end{adjustbox}
\end{table*}

\end{document}